\newcommand{\ar}[1]{\textcolor{black}{#1}}
\newcommand{\ff}[1]{\textcolor{black}{#1}}
\newcommand{\om}[1]{\textcolor{black}{#1}}
\newcommand{\gn}[1]{\textcolor{black}{#1}}
\newcommand{\el}[1]{\textcolor{black}{#1}}
\newcommand{\arx}[1]{\textcolor{black}{#1}}
\setlist[itemize]{topsep=0pt,itemsep=-1ex,partopsep=1ex,parsep=1ex}
\setlist[enumerate]{topsep=0ex,itemsep=0pt,partopsep=0ex,parsep=0ex}
        \theoremstyle{plain}
        \newtheorem{assumption}{Assumption}
        \newtheorem{theorem}{Theorem}
        \newtheorem{lemma}{Lemma}
        \theoremstyle{definition}
        \theoremstyle{plain}
\newcommand{\norm}[1]{\left\lVert#1\right\rVert}
\newcommand\scalar[2]{\langle #1, #2 \rangle}
\newcommand{\proj}[2]{\mathbf{Proj}_{#1}{(#2)}}
\DeclareMathOperator*{\E}{\mathbb{E}}
\DeclarePairedDelimiter\abs{\lvert}{\rvert}
\DeclareMathOperator{\diag}{diag}
\DeclareMathSymbol{\shortminus}{\mathbin}{AMSa}{"39}
\newcommand{\fedavg}{\texttt{FedAvg}}
\newcommand{\cafed}{\texttt{CA-Fed}}
\newcommand{\fast}{\texttt{F3AST}}
\newcommand{\adafed}{\texttt{AdaFed}}
\newcommand{\unbiased}{\texttt{Unbiased}}
\def\BibTeX{{\rm B\kern-.05em{\sc i\kern-.025em b}\kern-.08em
    T\kern-.1667em\lower.7ex\hbox{E}\kern-.125emX}}
\begin{document}

\title{Federated Learning under Heterogeneous and Correlated Client Availability\vspace{-0.05in}}

\author{
\IEEEauthorblockN{Angelo Rodio\IEEEauthorrefmark{1}, Francescomaria Faticanti\IEEEauthorrefmark{1}, Othmane Marfoq\IEEEauthorrefmark{1}\IEEEauthorrefmark{2}, Giovanni Neglia\IEEEauthorrefmark{1}, Emilio Leonardi\IEEEauthorrefmark{3}}
\IEEEauthorblockA{
\IEEEauthorrefmark{1}Inria, Université Côte d’Azur, France. Email: \{firstname.lastname\}@inria.fr, \\
\IEEEauthorrefmark{2}Accenture Labs, Sophia-Antipolis, France. Email: \{firstname.lastname\}@accenture.com, \\
\IEEEauthorrefmark{3}Politecnico di Torino, Turin, Italy. Email: \{firstname.lastname\}@polito.it}
\thanks{This research was supported by the French government through the 3IA Côte d’Azur Investments in the Future project by the National Research Agency (ANR) with reference ANR-19-P3IA-0002, and \gn{by Groupe La Poste, sponsor of Inria Foundation, in the framework of FedMalin Inria Challenge}.}
\thanks{\arx{\underline{A first version of this work has been accepted at IEEE INFOCOM 2023.}}}
\vspace{-0.34in}
}

\maketitle

\thispagestyle{plain}
\pagestyle{plain}

\begin{abstract}
The enormous amount of data produced by mobile and IoT devices has motivated the development of federated learning (FL), a framework allowing such devices (or clients) to collaboratively train machine learning models without sharing their local data. 
FL algorithms (like \fedavg{}) iteratively aggregate model updates computed by clients on their own datasets. 
Clients may exhibit different levels of participation, often correlated over time and with other clients.
This paper presents the first convergence analysis for a \fedavg{}-like FL algorithm under heterogeneous and correlated client availability. 
Our analysis highlights how correlation adversely affects the algorithm's convergence rate and how the aggregation strategy can alleviate this effect at the cost of steering training toward a biased model.
Guided by the theoretical analysis, we propose \cafed{}, a new FL algorithm that tries to balance the conflicting goals of maximizing convergence speed and minimizing model bias.
To this purpose, \cafed{} dynamically adapts the weight given to each client and may ignore clients with low availability and large correlation.
Our experimental results show that \cafed{} achieves higher time-average accuracy and a lower standard deviation than state-of-the-art \adafed{} and \fast{}, \ar{both on synthetic and real datasets}.


\end{abstract}

\begin{IEEEkeywords}
Federated Learning, Distributed Optimization.
\end{IEEEkeywords}

\section{Introduction}
\label{sec:intro}
The enormous amount of data generated by mobile and IoT devices motivated the emergence of distributed machine learning training paradigms~\cite{verbraeken2020survey,wang2018edge}. Federated Learning (FL)~\cite{konevcny2016federated, mcmahan2017communication, kairouz2021advances, li2020challenges} is an emerging framework where geographically distributed devices (or clients) participate in the training of a shared Machine Learning (ML) model without sharing their local data. FL was proposed to reduce the overall cost of collecting a large amount of data as well as to protect potentially sensitive users' private information. In the original Federated Averaging algorithm (\fedavg{})~\cite{mcmahan2017communication}, a central server selects a random subset of clients from the set of available clients and broadcasts them the shared model. The sampled clients perform a number of independent Stochastic Gradient Descent (SGD) steps over their local datasets and send their local model updates back to the server. Then, the server aggregates the received client updates to produce a new global model, and a new training round begins. 
\ar{\el{At} each iteration of \fedavg{}, the server typically samples  randomly a few hundred devices to participate~\cite{eichner2019semi,wang2021field}}.

In real-world scenarios, the availability/activity of clients is dictated by exogenous factors that are beyond the control of the orchestrating server and hard to predict. 
For instance, only smartphones that are idle, under charge, and connected to broadband networks are commonly allowed to participate in the training process~\cite{mcmahan2017communication,bonawitz2019towards}. These eligibility requirements can make the availability of devices correlated over time and space \cite{eichner2019semi, ding2020distributed, zhu2021diurnal, doan2020local}. 
For example, \emph{temporal correlation} may origin from a smartphone being under charge for a few consecutive hours and then ineligible for the rest of the day. Similarly, the activity of a sensor powered by renewable energy may depend on natural phenomena intrinsically correlated over time (e.g., solar light).
\emph{Spatial correlation} refers instead to correlation across different clients, which  often emerges as consequence of users' \ar{different} geographical distribution.
For instance, clients in the same time zone often exhibit similar availability patterns, e.g., due to time-of-day effects.

Temporal correlation in the data sampling procedure is known to negatively affect the performance of ML training even in the centralized setting~\cite{doan2020convergence, sun2018markov} and can potentially lead to \emph{catastrophic forgetting}: the data used during the final training phases can have a disproportionate effect on the final model, ``erasing'' the memory of previously learned information~\cite{MCCLOSKEY1989catastrophic, kirkpatrick2017overcoming}. Catastrophic forgetting has also been observed in FL, where clients in the same geographical area have more similar local data distributions and clients' participation follows a cyclic daily pattern (leading to spatial correlation)~\cite{eichner2019semi, ding2020distributed, zhu2021diurnal,tang2022fedcor}. Despite this evidence, a theoretical study of the convergence of FL algorithms under \ar{both} temporally and spatially correlated client participation is still missing. 

This paper provides the first convergence analysis of \fedavg{}~\cite{mcmahan2017communication} under heterogeneous and correlated client availability. We assume that clients' temporal and spatial availability follows an arbitrary finite-state Markov \el{chain}: this assumption models a realistic scenario in which the activity of clients is correlated and, at the same time, still allows the analytical tractability of the system.
Our theoretical analysis (i)~quantifies the negative effect of correlation on the algorithm's convergence rate through an additional term, \el{which depends}
on the spectral properties
of the Markov chain; 
(ii)~points out a trade-off between two conflicting objectives: slow convergence to the optimal model, or fast convergence to a biased model, i.e., a model that minimizes an objective function different from the initial target. 
Guided by insights from the theoretical analysis, we propose \cafed{}, an algorithm which dynamically assigns weights to clients and \el{achieves a good} trade-off between maximizing convergence speed and minimizing model bias. 
\gn{Interestingly, \cafed{}} can decide to ignore clients with low availability and \gn{high temporal correlation}. 
Our experimental results demonstrate that \gn{excluding \el{such} clients}
is a \gn{simple, but} effective approach to handle the heterogeneous and correlated client availability in FL. Indeed, while \cafed{} achieves a comparable maximum accuracy as the state-of-the-art methods \fast{}~\cite{ribero2022federated} and \adafed{}~\cite{tan2022adafed}, \ar{its test accuracy exhibits higher time-average and smaller variability over time}. 

The remainder of this paper is organized as follows. Section~\ref{sec:problem} describes the problem of correlated client availability in FL and discusses the main related works. Section~\ref{sec:analysis} provides a convergence analysis of \fedavg{} under heterogeneous and correlated client participation. \cafed{}, our correlation-aware FL algorithm, is presented in Section~\ref{sec:algorithm}. We evaluate \cafed{} in Section~\ref{sec:experiments}, comparing it with state-of-the-art methods on synthetic and real-world data. Section~\ref{sec:conclusion} concludes the paper.

\section{Background and Related Works}
\label{sec:problem}
We consider a finite set $\mathcal{K}$ of $N$~clients. Each client $k \in \mathcal{K}$ holds a local dataset $D_k$. Clients aim to jointly learn the parameters \gn{$\bm{w} \in W \subseteq \mathbb{R}^{d}$} of a global ML model (e.g., the weights of a neural network architecture). During training, the quality of the model with parameters $\bm{w}$ on a \ar{data sample $\xi \in D_k$ is measured} by a loss function $f(\bm{w}; \xi)$.
The  clients solve, under the orchestration of a central server, the following optimization problem:
\begin{gather}
    \label{opt:target_objective}
    \min_{\bm{w} \in W\subseteq \mathbb{R}^{d}}\left[F(\bm{w}) \coloneqq \sum_{k \in \mathcal{K}} \alpha_k F_k(\bm{w}) \right],
\end{gather}
where $F_k(\bm{w})\coloneqq \frac{1}{|D_k|} \sum_{\xi \in D_k} f(\bm{w}; \xi)$ is the average loss computed on client $k$'s local dataset, and \ar{$\bm{\alpha} = (\alpha_k)_{k \in \mathcal{K}}$} are positive coefficients such that $\sum_{k}\alpha_{k}=1$. \ar{They represent the} \gn{ \emph{target importance}} \ar{assigned by the central server to each client~$k$}. Typically $(\alpha_k)_{k \in \mathcal{K}}$ are set proportional to the clients' dataset size $|D_{k}|$, such that the objective function $F$ in~\eqref{opt:target_objective} coincides with the average loss computed on the union of the clients' local datasets \ar{$D = \cup_{k \in \mathcal{K}} D_k$}. 

Under proper assumptions, precised in Section~\ref{sec:analysis}, \ff{Problem~\eqref{opt:target_objective}} admits a \el{unique} solution. We use $\bm{w}^{*}$ (resp. $F^{*}$) to denote \el{the} minimizer (resp. the minimum value) of $F$. Moreover, for $\ar{k{\in}\mathcal{K}}$, $F_{k}$ admits a \el{unique} minimizer on $W$. We use $\bm{w}_{k}^{*}$ (resp. $F_{k}^{*}$) to denote \ar{the} minimizer (resp. the minimum value) of $F_{k}$.

\ff{Problem~\eqref{opt:target_objective}} is commonly solved through iterative algorithms~\cite{mcmahan2017communication, wang2021field} requiring multiple communication rounds between the server and the clients. At round $t>0$, the server broadcasts the latest estimate of the global model $\bm{w}_{t, 0}$ \ar{to the set of available clients} \gn{($A_{t}$)}.
Client $k \in A_t$ updates the global model with its local data through $E\geq1$ steps of local Stochastic Gradient Descent (SGD):
\begin{align}
    \textstyle
    \bm{w}_{t,j+1}^k = \bm{w}_{t,j}^k - \eta_t \nabla F_k (\bm{w}_{t,j}^k, \mathcal{B}_{t,j}^k) \quad j=0,\dots,E-1,
    \label{eq:localSGD}
\end{align} 
where $\eta_t>0$ is an appropriately chosen learning rate, referred to as \emph{local learning rate}; $\mathcal{B}_{t,j}^k$ is a random batch sampled from client $k$' local dataset \ar{at round $t$ and step $j$}; $\nabla F_k (\cdot, \mathcal{B})\coloneqq\frac{1}{|\mathcal{B}|}\sum_{\xi \in\mathcal{B}} \nabla f(\cdot, \xi)$ is an unbiased estimator of the local gradient $\nabla F_{k}$. Then, each client sends its local model update $\Delta^{k}_{t} \coloneqq \bm{w}_{t,E}^k - \bm{w}_{t,0}^k$ to the server. The server computes $\Delta_{t} \coloneqq \sum_{k \in A_t} q_k \cdot \Delta_{t}^{k}$, a weighted average of the clients' local updates with \ar{non-negative \emph{aggregation weights} $\bm{q} = (q_{k})_{k\in \mathcal{K}}$. The choice of the aggregation weights defines an aggregation strategy (we will discuss different aggregation strategies later)}.
The aggregated update $\Delta_{t}$ can be interpreted as a proxy for $-\nabla F(\bm{w}_{t, 0})$; the server applies it to the global model:
\begin{align}
    \textstyle
    \bm{w}_{t+1,0} = \proj{W}{\bm{w}_{t,0} + \eta_{s} \cdot \Delta_{t}}
    \label{eq:fedavg_global_aggregation}
\end{align}
where \ar{$\proj{W}{\cdot}$ denotes the projection over the set $W$}, and $\eta_{s}>0$ is an appropriately chosen learning rate, referred to as the \emph{server learning rate}.\footnote{
\ar{The aggregation rule~\eqref{eq:fedavg_global_aggregation} has been considered also in other works, e.g.,~\cite{nichol2018first, reddi2021adaptive, wang2021field}. In other FL algorithms, the server computes an average of clients’ local models. This aggregation rule can be obtained with minor changes to~\eqref{eq:fedavg_global_aggregation}.}}

The aggregate update $\Delta_{t}$ is, in general, a biased estimator of $\ar{-}\nabla F(\bm{w}_{t, 0})$, where each client $k$ is taken into account proportionally to its frequency of appearance in the set~$A_t$ and to its aggregation weight~$q_k$. 
Indeed, under proper assumptions specified in Section~\ref{sec:analysis}, one can show~(see Theorem~\ref{theorem:bias}) that the update rule described by \eqref{eq:localSGD} and \eqref{eq:fedavg_global_aggregation} converges to \el{the unique} minimizer of a biased global objective $F_B$, \el{which} depends both \ar{on the clients' availability} \gn{(i.e., on the sequence $\left(A_{t}\right)_{t>0}$)} \ar{and on the aggregation strategy} \gn{(i.e., on  $\bm{q}=\left(q_{k}\right)_{k\in\mathcal{K}}$)}: 
\begin{align}
    \textstyle
    F_B(\bm{w}) \coloneqq \sum_{k=1}^N p_k F_k(\bm{w}),
    ~\text{with} ~p_k \coloneqq \frac{\pi_{k} q_{k}}{\sum_{h=1}^N \pi_{h} q_{h}},
    \label{def:biased_objective}
\end{align}
where $\pi_k \coloneqq \lim_{t \rightarrow \infty}  \mathbb{P}(k \in A_t)$ is the asymptotic availability of client $k$. \ar{The coefficients $\bm{p} = (p_{k})_{k \in \mathcal{K}}$ can be interpreted as the}
\gn{\emph{biased importance} the server is giving}
\ar{to each client~$k$ during training, in general different from the \emph{\ar{target} importance} $\bm{\alpha}$.} In what follows,  $\bm{w}_{B}^{*}$ (resp. $F_{B}^{*}$) denotes \el{the}   minimizer (resp. the minimum value) of $F_B$.

In some large-scale FL applications, like training Google keyboard next-word prediction models, each client participates in training at most for one round. The orchestrator usually selects a few hundred clients at each round for a few thousand rounds (e.g., see~\cite[Table~2]{kairouz2021advances}), but the available set of clients may include hundreds of millions of Android devices. In this scenario, it is difficult to address the potential bias unless there is some a-priori information about each client's availability. 
Anyway, FL can be used by service providers with access to a much smaller set of clients (e.g., smartphone users that have installed a specific app). In this case, a client participates multiple times in training: the orchestrating server may keep track of each client's availability and try to compensate for the potentially dangerous heterogeneity in their participation. 

Much previous effort on federated learning~\cite{mcmahan2017communication, li2019convergence, li2020federated, chen2020optimal, fraboni2021clustered, tang2022fedcor, tan2022adafed, ribero2022federated} considered this problem and, under different assumptions on the clients' \gn{availability (i.e., on $\left(A_{t}\right)_{t>0}$)}, 
designed aggregation strategies that unbias $\Delta_{t}$ through an appropriate choice of $\bm{q}$.
Reference~\cite{li2019convergence} provides the first analysis of \fedavg{} on non-iid data under clients' partial participation. 
\gn{Their analysis covers both the case when active clients are sampled uniformly at random without replacement from $\mathcal{K}$ and assigned aggregation weights equal to their \ar{target} importance (as assumed in~\cite{mcmahan2017communication}), and the case when active clients are sampled iid with replacement from $\mathcal{K}$ with probabilities $\bm{\alpha}$ and assigned equal weights (as assumed in~\cite{li2020federated}).
However, references~\cite{mcmahan2017communication, li2019convergence, li2020federated} ignore the variance induced by the clients stochastic availability.} The authors of~\cite{chen2020optimal} reduce such variance by considering only the clients with important updates, as measured by the value of their norm. References~\cite{tang2022fedcor} and~\cite{fraboni2021clustered} reduce the aggregation variance through clustered and soft-clustered sampling, respectively. 

Some recent works~\cite{tan2022adafed,ribero2022federated,cho2022towards} do not actively pursue the optimization of the unbiased objective. Instead, they derive bounds for the convergence error and propose heuristics to minimize those bounds, potentially introducing some bias. Our work follows a similar development: we compare our algorithm with~\fast{} \ar{from~\cite{ribero2022federated}} and~\adafed{} from~\cite{tan2022adafed}.

The novelty of our study is \el{in} considering the spatial and temporal correlation in clients' availability dynamics. As discussed in the introduction, such correlations are also introduced by clients' eligibility criteria, e.g., smartphones being under charge and connected to broadband networks. The effect of correlation has been ignored until now, probably due to the additional complexity in studying FL algorithms' convergence. To the best of our knowledge, the only exception \ar{is~\cite{ribero2022federated}}, which scratches the issue of spatial correlation by proposing two different algorithms for the case when clients' \ff{availabilities} are uncorrelated and for the case when they are positively correlated (there is no smooth transition from one algorithm to the other as a function of the degree of correlation).

The effect of temporal correlation on \emph{centralized} stochastic gradient methods has been addressed in~\cite{sun2018markov, doan2020convergence,doan2020finite, doan2020local}: these works study a variant of stochastic gradient descent where samples are drawn according to a Markov chain. Reference~\cite{doan2020local} extends its analysis to a FL setting where each client draws samples according to a Markov chain.  
In contrast, our work does not assume a correlation in the data sampling but rather in the client's availability. Nevertheless, some of our proof techniques are similar to those used in this line of work and, in particular, we rely on some results in~\cite{sun2018markov}.

\section{Analysis}
\label{sec:analysis}
\subsection{Main assumptions}

We consider a time-slotted system where a slot corresponds to one FL communication round.  
We assume that clients' availability over the timeslots $t \in \mathbb N$ follows a discrete-time Markov chain $(A_t)_{t\geq0}$.\footnote{
\ar{In Section~\ref{subsection:opt}} we will focus on the case where this chain is the superposition of $N$ independent Markov chains, one for each client.} 
\begin{assumption}
The Markov chain $(A_t)_{t\geq0}$ on the finite state space $[M]$ is time-homogeneous, irreducible, and aperiodic. It has transition matrix \ar{$\bm{P}$} and stationary distribution \ar{$\bm{\pi}$}.
\label{assumption:markov_chain}
\end{assumption}

Markov chains have already been used in the literature to model the dynamics of stochastic networks where some nodes or edges in the graph can switch between active and inactive states~\cite{meyers2021markov, olle1997dynamical}.
The previous Markovian assumption, while allowing a great degree of flexibility, still guarantees the analytical tractability of the system.
The \el{ distance dynamics between  current 
 and stationary distribution of the Markov process} can be characterized by the spectral properties of its transition matrix $\bm{P}$~\cite{levin2017markov}. Let $\lambda_2(\bm{P})$ denote the the second largest eigenvalue of $\bm{P}$ in absolute value. Previous works~\cite{sun2018markov} have shown that: 
\begin{align}
    \textstyle
    \max\limits_{i,j \in [M]}{\abs{[\bm{P}^t]_{i,j} - \pi_j}}\leq C_P \cdot \lambda(\bm{P})^t,
    &&\text{for $t \geq T_P$},
    \label{eq:mc_convergence}
\end{align}
where the parameter $\lambda(\bm{P}) \coloneqq {(\lambda_2(\bm{P})+1)}/{2}$, and $C_P, T_P$ are positive constants whose values are reported in~\cite[Lemma~1]{sun2018markov}.\footnote{Note that \eqref{eq:mc_convergence} holds for different definitions of $\lambda(\bm{P})$ as far as $\lambda(\bm{P}) \in (\lambda_2(\bm{P}), 1)$. The specific choice for $\lambda(\bm{P})$ changes the constants $C_P$ and $T_P$.} 
Note that $\lambda(\bm{P})$~quantifies the correlation of the Markov process $(A_t)_{t\geq0}$: the closer $\lambda(\bm{P})$ is to one, the slower the Markov chain converges to its stationary distribution. 

In our analysis, we make the following additional assumptions. Let $\bm{w}^*, \bm{w}_B^*$ denote the \el{minimizers} of $F$ and  $F_B$ \om{on $W$}, respectively.

\begin{assumption}
The hypothesis class $W$ is convex, compact, and contains \el{in its interior  \ar{the} minimizers} $\bm{w}^*, \bm{w}_B^*, \bm{w}_k^*$.
\label{assumption:space}
\end{assumption}

The following assumptions concern clients' local objective functions $\{F_k\}_{k \in \mathcal K}$. Assumptions \ref{assumption:smoothness} and \ref{assumption:convexity} are standard in the literature on convex optimization \cite[Sections 4.1, 4.2]{bottou2018optimization}. Assumption \ref{assumption:variance} is a standard hypothesis in the analysis of federated optimization algorithms \cite[Section~6.1]{wang2021field}.

\begin{assumption}[L-smoothness]
The local functions $\{F_k\}_{k=1}^N$ have L-Lipschitz continuous gradients: 
$F_k(\bm{v}) \leq F_k(\bm{w}) + \scalar{\nabla F_k(\bm{w})}{\bm{v} - \bm{w}} + \frac{L}{2} \norm{\bm{v} - \bm{w}}_2^2$, $\forall \bm{v}, \bm{w} \in W$.
\label{assumption:smoothness}
\end{assumption}
\begin{assumption}[Strong convexity]
The local functions $\{F_k\}_{k=1}^N$ are $\mu$-strongly convex: 
$F_k(\bm{v}) \geq F_k(\bm{w}) + \scalar{\nabla F_k(\bm{w})}{\bm{v} - \bm{w}} + \frac{\mu}{2} \norm{\bm{v} - \bm{w}}_2^2, ~\forall \bm{v}, \bm{w} \in W$.
\label{assumption:convexity}
\end{assumption}

\begin{assumption}[Bounded variance]
The variance of stochastic gradients in each device is bounded: 
$\E \Vert \nabla F_k (\bm{w}_{t,j}^k, \xi_{t,j}^k) - \nabla F_k(\bm{w}_{t,j}^k) \Vert^2 \leq \sigma_k^2, ~k=1,\dots,N$.
\label{assumption:variance}
\end{assumption}

Assumptions~\ref{assumption:space}--\ref{assumption:variance} imply the following properties for the local functions, described by Lemma~\ref{lem:prop} \ar{(proof in Appendix~\ref{appendix:bias})}.
\begin{lemma} 
    \label{lem:prop}
    Under Assumptions~\ref{assumption:space}--\ref{assumption:variance}, there exist constants $D$, $G$, and $H>0$, such that, for $\bm{w} \in W$ and $k\in\mathcal{K}$, we have:
    \begin{align}
        \Vert \nabla F_k (\bm{w}) \Vert & \leq D, \label{eq:D}
        \\
        \E \Vert \nabla F_k (\bm{w}, \xi) \Vert^2 &\leq   G^2,  \label{eq:G}
        \\
        \abs{F_k(\bm{w}) - F_k(\bm{w}_B^*)} &\leq H.   \label{eq:H}
    \end{align}
\end{lemma}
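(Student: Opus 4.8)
The plan is to derive all three bounds from a single structural observation. By Assumption~\ref{assumption:space}, $W$ is compact, so its diameter $D_W \coloneqq \sup_{\bm{u},\bm{v}\in W}\norm{\bm{u}-\bm{v}}$ is finite, and every minimizer $\bm{w}_k^*$ (as well as $\bm{w}_B^*$) lies in the \emph{interior} of $W$. The interiority is exactly what licenses first-order optimality: an interior minimizer of a differentiable function has vanishing gradient, so $\nabla F_k(\bm{w}_k^*)=\bm{0}$ for every $k$. With this in hand, the three estimates are short computations, and I would carry them out in the order \eqref{eq:D}, \eqref{eq:G}, \eqref{eq:H}, since each uses the previous one.

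For \eqref{eq:D}, I would write $\norm{\nabla F_k(\bm{w})}=\norm{\nabla F_k(\bm{w})-\nabla F_k(\bm{w}_k^*)}$ and invoke the $L$-Lipschitz continuity of $\nabla F_k$ (equivalent to the $L$-smoothness in Assumption~\ref{assumption:smoothness}), which yields $\norm{\nabla F_k(\bm{w})}\le L\norm{\bm{w}-\bm{w}_k^*}\le L D_W$; so $D\coloneqq L D_W$ works. For \eqref{eq:G}, I would use unbiasedness of the stochastic gradient, $\E[\nabla F_k(\bm{w},\xi)]=\nabla F_k(\bm{w})$, to split $\E\norm{\nabla F_k(\bm{w},\xi)}^2=\norm{\nabla F_k(\bm{w})}^2+\E\norm{\nabla F_k(\bm{w},\xi)-\nabla F_k(\bm{w})}^2$ (the cross term vanishes), bound the first term by $D^2$ via \eqref{eq:D} and the second by $\sigma_k^2$ via Assumption~\ref{assumption:variance}, and set $G^2\coloneqq D^2+\max_k\sigma_k^2$.

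For \eqref{eq:H}, I would expand $F_k$ around $\bm{w}_B^*$ in both directions. The $L$-smoothness inequality of Assumption~\ref{assumption:smoothness} gives $F_k(\bm{w})-F_k(\bm{w}_B^*)\le \scalar{\nabla F_k(\bm{w}_B^*)}{\bm{w}-\bm{w}_B^*}+\tfrac{L}{2}\norm{\bm{w}-\bm{w}_B^*}^2\le D D_W+\tfrac{L}{2}D_W^2$, using Cauchy--Schwarz together with \eqref{eq:D} and $\norm{\bm{w}-\bm{w}_B^*}\le D_W$; the $\mu$-strong-convexity inequality of Assumption~\ref{assumption:convexity} gives $F_k(\bm{w})-F_k(\bm{w}_B^*)\ge \scalar{\nabla F_k(\bm{w}_B^*)}{\bm{w}-\bm{w}_B^*}\ge -D D_W$ (dropping the nonnegative quadratic term). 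Taking $H\coloneqq D D_W+\tfrac{L}{2}D_W^2$ then bounds $\abs{F_k(\bm{w})-F_k(\bm{w}_B^*)}$.

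There is no genuinely hard step here; the only points needing care are (i) justifying that the interior minimizers have zero gradient, which is precisely why the interiority clause of Assumption~\ref{assumption:space} is needed, and (ii) reading Assumption~\ref{assumption:variance} as holding at every $\bm{w}\in W$ rather than only along the algorithm's trajectory. As an alternative, one could note that $F_k$ and $\nabla F_k$ are continuous on the compact set $W$ (a consequence of $L$-smoothness), so all three quantities are automatically bounded; I prefer the constructive argument above because it exhibits the constants $D$, $G$, $H$ explicitly in terms of $L$, $\mu$, $D_W$, and $\max_k\sigma_k^2$, which is convenient for the later convergence bounds.
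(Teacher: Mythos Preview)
Your proposal is correct and follows essentially the same approach as the paper: both derive all three bounds from the compactness of $W$ combined with the smoothness/convexity properties of the $F_k$. The paper's proof is terser---it simply notes that $D \coloneqq \sup_{\bm{w}\in W,\,k} \norm{\nabla F_k(\bm{w})}$ is finite (continuity of $\nabla F_k$ on a compact set), then derives \eqref{eq:H} from the resulting Lipschitz bound and \eqref{eq:G} from the same bias--variance split you use---whereas you additionally exploit the interior-minimizer clause of Assumption~\ref{assumption:space} to exhibit explicit constants, which is precisely the constructive refinement of the continuity-compactness alternative you yourself mention at the end.
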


Similarly to other works \cite{li2019convergence, li2020federated, wang2020tackling, wang2021field}, we introduce a metric to quantify the heterogeneity of clients' local datasets:
\begin{align}
    \Gamma &\coloneqq \max_{k \in \mathcal{K}} \{ F_k(\bm{w}^*) - F_k^* \}.
    \label{eq:gamma}
\end{align}
If the local datasets are identical, the local functions $\{F_k\}_{k \in \mathcal K}$ coincide among them and with $F$, $\bm{w}^*$ is a \el{minimizer} of each local function, and $\Gamma=0$. In general, $\Gamma$ is smaller the closer the distributions the local datasets are drawn from.

\subsection{Main theorems}

Theorem~\ref{theorem:error} (proof in Appendix \ref{appendix:error}) decomposes the error of the \ar{target} global objective as the sum of an optimization error for the biased global objective and a bias error. 

\begin{theorem}[{Decomposing the total error}]
\label{theorem:error}
    Under Assumptions~\ref{assumption:space}--\ref{assumption:convexity}, the optimization error of the \ar{target} global objective $\epsilon = F(\bm{w}) - F^*$ can be bounded as follows:
    \begin{align}
        \epsilon \leq \underbrace{2 \kappa^2 (F_B(\bm{w}) - F_B^*) \vphantom{\chi^2_{\bm{\alpha} \parallel \bm{p}}}}_{\coloneqq \epsilon_{\text{opt}}} 
        + \underbrace{2 \kappa^4 \chi^2_{\bm{\alpha} \parallel \bm{p}} \Gamma}_{\coloneqq \epsilon_{\text{bias}}},
        \label{eq:total_error}
    \end{align}
where $\kappa \coloneqq {L}/{\mu}$, and $\chi^2_{\bm{\alpha} \parallel \bm{p}} \coloneqq \sum_{k=1}^{N} {(\alpha_k - p_k)^2}/{p_k}$.
\end{theorem}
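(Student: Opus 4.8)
The plan is to express the target error $\epsilon = F(\bm w) - F^*$ in terms of quantities controlled by the biased objective $F_B$, exploiting the fact that both $F$ and $F_B$ are convex combinations of the \emph{same} local functions $\{F_k\}$, with weights $\bm\alpha$ and $\bm p$ respectively. The natural first step is to write
\begin{align}
F(\bm w) - F^* = \big(F(\bm w) - F_B(\bm w)\big) + \big(F_B(\bm w) - F_B^*\big) + \big(F_B^* - F^*\big).
\label{eq:plan_split}
\end{align}
The middle term is already $\tfrac12\epsilon_{\text{opt}}$ up to the factor $2\kappa^2$. The first and third terms must be absorbed into $\epsilon_{\text{bias}}$. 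For the first term, $F(\bm w) - F_B(\bm w) = \sum_k (\alpha_k - p_k) F_k(\bm w)$; since $\sum_k(\alpha_k - p_k) = 0$, I can subtract any constant inside, in particular $F_k(\bm w_B^*)$, to get $\sum_k (\alpha_k - p_k)\big(F_k(\bm w) - F_k(\bm w_B^*)\big)$, which I would bound by Cauchy--Schwarz with the weighting $p_k$: $\sum_k (\alpha_k - p_k)\big(F_k(\bm w) - F_k(\bm w_B^*)\big) \le \sqrt{\chi^2_{\bm\alpha\parallel\bm p}} \cdot \sqrt{\sum_k p_k (F_k(\bm w) - F_k(\bm w_B^*))^2}$.

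The core obstacle — and the step I expect to be the crux — is controlling $\sum_k p_k (F_k(\bm w) - F_k(\bm w_B^*))^2$ and the term $F_B^* - F^*$ by $\kappa^2 \chi^2_{\bm\alpha\parallel\bm p}\Gamma$ plus a multiple of $F_B(\bm w) - F_B^*$. The key device is the standard quadratic-growth/smoothness sandwich from Assumptions~\ref{assumption:smoothness}--\ref{assumption:convexity}: for each $k$, $\tfrac{\mu}{2}\norm{\bm w - \bm w_k^*}^2 \le F_k(\bm w) - F_k^* \le \tfrac{L}{2}\norm{\bm w - \bm w_k^*}^2$, and likewise for the aggregate functions $F$ (resp. $F_B$), which are $\mu$-strongly convex and $L$-smooth as convex combinations. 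This lets me trade function-value gaps for squared distances to minimizers and back, at the cost of factors of $\kappa = L/\mu$; chaining these (gap of $F_k$ $\to$ distance to $\bm w_k^*$ $\to$ via triangle inequality, distance to $\bm w_B^*$ $+$ distance $\bm w_B^* \to \bm w_k^*$, the latter bounded through $\Gamma$) is what produces the $\kappa^4$ and the $\Gamma$ in $\epsilon_{\text{bias}}$. For $F_B^* - F^* \le F_B(\bm w^*) - F_B^* \le \tfrac{L}{2}\norm{\bm w^* - \bm w_B^*}^2$, and similarly $F^* - F_B^* \le \tfrac L2 \norm{\bm w_B^* - \bm w^*}^2$, so both $\bm w^*$ and $\bm w_B^*$ sit in a $\chi^2$-controlled neighborhood of each other; quantifying $\norm{\bm w^* - \bm w_B^*}$ via the optimality conditions $\nabla F(\bm w^*) = 0$, $\nabla F_B(\bm w_B^*) = 0$ and $\nabla F - \nabla F_B = \sum_k(\alpha_k - p_k)\nabla F_k$ (again with zero-sum weights, so one may center the gradients) gives $\mu\norm{\bm w^* - \bm w_B^*} \le \sqrt{\chi^2_{\bm\alpha\parallel\bm p}}\sqrt{\sum_k p_k \norm{\nabla F_k(\bm w^*)}^2}$, and $\norm{\nabla F_k(\bm w^*)}^2 \le 2L(F_k(\bm w^*) - F_k^*) \le 2L\Gamma$ by smoothness and the definition~\eqref{eq:gamma}.

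Assembling these pieces: each of the two stray terms in~\eqref{eq:plan_split} is bounded by something of the form $c\,\kappa^{a}\sqrt{\chi^2_{\bm\alpha\parallel\bm p}}\sqrt{\Gamma}\cdot\sqrt{F_B(\bm w) - F_B^*}$ or $c\,\kappa^{a}\chi^2_{\bm\alpha\parallel\bm p}\Gamma$; I would then use Young's inequality ($2ab \le a^2 + b^2$, or a weighted version) to split the cross term into a piece proportional to $F_B(\bm w) - F_B^*$ (folded into $\epsilon_{\text{opt}}$) and a piece proportional to $\chi^2_{\bm\alpha\parallel\bm p}\Gamma$ (folded into $\epsilon_{\text{bias}}$), tracking the constants so the final factors are exactly $2\kappa^2$ and $2\kappa^4$. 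The main risk is bookkeeping: getting the powers of $\kappa$ to collapse to $2$ and $4$ rather than something larger requires being careful about \emph{which} minimizer each strong-convexity/smoothness bound is applied at, and choosing the Young's-inequality weights to match. I would also need Assumption~\ref{assumption:space} to ensure all minimizers are interior (so first-order conditions hold with zero gradient) and Lemma~\ref{lem:prop} is not strictly needed here but reassures that all the quantities in play are finite.
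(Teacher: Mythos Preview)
Your proposal differs from the paper's proof and contains a concrete error. You claim $F_B^* - F^* \le F_B(\bm w^*) - F_B^*$, but this is false: take $N=2$, $F_1(x)=x^2$, $F_2(x)=(x-1)^2+100$, $\bm\alpha=(0.9,0.1)$, $\bm p=(0.1,0.9)$; then $F_B^*-F^*=80$ while $F_B(\bm w^*)-F_B^*=0.64$. The difference $F_B^*-F^*$ can pick up the \emph{additive offsets} between the values $F_k^*$, which are not controlled by $\Gamma$ nor by $\norm{\bm w^*-\bm w_B^*}$, so no smoothness/strong-convexity argument on $F_B$ alone can bound it. Your function-level split $F(\bm w)-F^* = (F(\bm w)-F_B(\bm w)) + (F_B(\bm w)-F_B^*) + (F_B^*-F^*)$ is therefore the wrong starting point; it also forces you to carry a $\bm w$-dependent cross term $F(\bm w)-F_B(\bm w)$ through a Young-inequality step whose constants you yourself flag as risky.

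The paper avoids both issues by decomposing in \emph{parameter space}. It first passes to distances, $F(\bm w)-F^* \le \tfrac{1}{2\mu}\norm{\nabla F(\bm w)}^2 \le \tfrac{L^2}{2\mu}\norm{\bm w-\bm w^*}^2$, then uses $\norm{\bm w-\bm w^*}^2 \le 2\norm{\bm w-\bm w_B^*}^2 + 2\norm{\bm w_B^*-\bm w^*}^2$, and converts each piece back by strong convexity to get $F(\bm w)-F^* \le 2\kappa^2(F_B(\bm w)-F_B^*) + 2\kappa^2\big(F(\bm w_B^*)-F^*\big)$ directly, with no cross term. Crucially the bias piece is $F(\bm w_B^*)-F^*$, not $F_B^*-F^*$; this \emph{is} controlled by the PL inequality for $F$ at $\bm w_B^*$ together with $\nabla F(\bm w_B^*)=\sum_k(\alpha_k-p_k)\nabla F_k(\bm w_B^*)$ (from $\nabla F_B(\bm w_B^*)=0$), smoothness $\norm{\nabla F_k(\bm w_B^*)}\le L\norm{\bm w_B^*-\bm w_k^*}$, and Cauchy--Schwarz with weights $\sqrt{p_k}$---precisely the device you identified, but evaluated at $\bm w_B^*$ rather than $\bm w^*$. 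Along this route the constants $2\kappa^2$ and $2\kappa^4$ fall out without any tuning.
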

Theorem~\ref{theorem:bias} below proves that the optimization error $\epsilon_{\text{opt}}$ associated to the biased objective $F_B$, evaluated on the trajectory determined by scheme~\eqref{eq:fedavg_global_aggregation}, asymptotically vanishes. The non-vanishing bias error $\epsilon_{\text{bias}}$ captures the discrepancy between $F(\bm{w})$ and $F_B(\bm{w})$. This latter term depends on the chi-square divergence $\chi^2_{\bm{\alpha} \parallel \bm{p}}$ between the \ar{target and biased probability distributions $\bm{\alpha} = (\alpha_k)_{k \in \mathcal{K}}$ and $\bm{p} = (p_k)_{k \in \mathcal{K}}$}, and on $\Gamma$, that quantifies the degree of heterogeneity of the local functions. When all local functions are identical ($\Gamma = 0$), the bias term $\epsilon_{\text{bias}}$ also vanishes.
For $\Gamma >0$, the bias error can still be controlled by the aggregation weights assigned to the devices. In particular, the bias term vanishes when \ar{$q_k \propto \alpha_k / \pi_k, \forall k \in \mathcal{K}$}. \ar{Since it asymptotically cancels the bias error, we refer to this choice as \textit{unbiased aggregation strategy}}. 

However, in practice, FL training is limited to a finite number of iterations $T$ (typically a few hundreds~\cite{eichner2019semi, kairouz2021advances}), and the previous asymptotic considerations may not apply. \ar{In this regime, the unbiased aggregation strategy can be suboptimal, since} \el{the  minimization of $\epsilon_{\text{bias}}$ not necessarily leads
to the minimization of the total error $\epsilon \leq \epsilon_{\text{opt}} + \epsilon_{\text{bias}}$}. This motivates the analysis of the optimization error $\epsilon_{\text{opt}}$.

\begin{theorem}[{Convergence of the optimization error} ${\epsilon}_{\text{{opt}}}$]
Let Assumptions \ref{assumption:markov_chain}--\ref{assumption:variance} hold and the constants $M, L, D, G, H, \Gamma$, $\sigma_k, C_P, T_P, \lambda(\bm{P})$ be defined as above.
Let $Q = \sum_{k \in \mathcal{K}} q_k$. Let the stepsizes satisfy:
\begin{align}
    \textstyle
    \sum_t \eta_t = + \infty, 
    \quad
    \sum_t \ln(t) \cdot \eta_t^2 < + \infty.
    \label{eq:stepsize}
\end{align}
Let $T$ denote the total communication rounds. For $T \geq T_P$, the expected optimization error can be bounded as follows:
\begin{equation}
    \E [F_B(\bar{\bm{w}}_{T,0}) - F_B^*] \leq
    \frac
    {
      \frac
        {\frac{1}{2} \bm{q}^\intercal \bm{\Sigma} \bm{q} + \upsilon}
        {\boldsymbol{\pi}^\intercal \bm{q}} +
      \psi +
      \frac
        {\phi}
        {\ln(1/\lambda(\bm{P}))}
    }
    {(\sum_{t=1}^T \eta_t)},
    \label{eq:convergence_bound}
\end{equation}
where ${\bar{\bm{w}}_{T,0}} \coloneqq \frac{\sum_{t=1}^T \eta_t \bm{w}_{t,0}}{\sum_{t=1}^T \eta_t}$, and
\begin{flalign*}
    \bm{\Sigma} = 
    &\textstyle 
    \diag(\sigma_k^2 \pi_k \sum_t \eta_t^2), &&\\
    \upsilon = 
    &\textstyle 
    \frac{2}{E} \norm{\bm{w}_{0,0} - \bm{w}^*}^2 + \frac{1}{4} {M Q} \sum_t (\eta_t^2 + \frac{1}{t^2}), &&\\
    \psi = 
    &\textstyle 
    ~4L(EQ+2) \Gamma \sum_t \eta_t^2 
    \textstyle 
    + \frac{2}{3} (E-1) (2E-1) G^2 \sum_t \eta_t^2, &&\\
    \ar{\mathcal{J}_t =}
    &\textstyle 
    \ar{\min \left\{ \max \left\{ \left \lceil {\ln \left( 2C_PHt \right)}/{\ln \left( 1/\lambda (\bm{P}) \right)} \right \rceil, T_P \right\}, t \right\}}, &&\\
    \phi =
    &\textstyle 
    ~2 E D G Q \sum_t \ln(2C_PHt) \eta_{t - \mathcal{J}_t}^2.
\end{flalign*}
\label{theorem:bias}
\end{theorem}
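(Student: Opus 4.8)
The plan is to establish a one-step descent recursion for the squared distance of the server iterate to the minimizer $\bm{w}_B^*$ of $F_B$ and then sum it against the weights $(\eta_t)$. Since $\proj{W}{\cdot}$ is non-expansive (Assumption~\ref{assumption:space}), the server update~\eqref{eq:fedavg_global_aggregation} gives $\|\bm{w}_{t+1,0}-\bm{w}_B^*\|^2 \le \|\bm{w}_{t,0}-\bm{w}_B^*\|^2 + 2\eta_s\,\scalar{\Delta_t}{\bm{w}_{t,0}-\bm{w}_B^*} + \eta_s^2\|\Delta_t\|^2$, and unrolling the local steps~\eqref{eq:localSGD} yields $\Delta_t = -\eta_t\sum_{k\in\mathcal{K}} q_k\,\mathds{1}_{\{k\in A_t\}}\sum_{j=0}^{E-1}\nabla F_k(\bm{w}_{t,j}^k,\mathcal{B}_{t,j}^k)$. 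The core of the argument is a threefold correction of $\Delta_t$: add and subtract (i)~the full-batch gradients $\nabla F_k(\bm{w}_{t,j}^k)$, which removes the stochastic noise; (ii)~the gradients at the server iterate $\nabla F_k(\bm{w}_{t,0})$, which removes the local-drift error; and (iii)~the stationary probabilities $\pi_k$ in place of $\mathds{1}_{\{k\in A_t\}}$, which removes the Markovian sampling bias. Because $\nabla F_B(\bm{w}) = (\bm{\pi}^\intercal\bm{q})^{-1}\sum_k \pi_k q_k\nabla F_k(\bm{w})$, the resulting ``clean'' cross term equals $-2\eta_s\eta_t E(\bm{\pi}^\intercal\bm{q})\,\scalar{\nabla F_B(\bm{w}_{t,0})}{\bm{w}_{t,0}-\bm{w}_B^*}$, which by $\mu$-strong convexity of $F_B$ (Assumption~\ref{assumption:convexity}) is at most $-2\eta_s\eta_t E(\bm{\pi}^\intercal\bm{q})\big(F_B(\bm{w}_{t,0})-F_B^*\big)$. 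Summing over $t$, telescoping the distance terms (whose leftover initial gap I bound in terms of $\|\bm{w}_{0,0}-\bm{w}^*\|^2$), dividing by the positive constant proportional to $E(\bm{\pi}^\intercal\bm{q})\sum_t\eta_t$, and applying Jensen's inequality along $\bar{\bm{w}}_{T,0}$ then produces the left-hand side of~\eqref{eq:convergence_bound}.

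Next I would bound the three correction terms. The local-drift term uses $L$-smoothness (Assumption~\ref{assumption:smoothness}) and the estimate $\|\bm{w}_{t,j}^k-\bm{w}_{t,0}\| \le \eta_t\sum_{i<j}\|\nabla F_k(\bm{w}_{t,i}^k,\mathcal{B}_{t,i}^k)\| \le \eta_t E G$ from~\eqref{eq:G}, together with the standard inequality relating $\|\nabla F_k(\bm{w}_{t,0})\|^2$ to the heterogeneity $\Gamma$ of~\eqref{eq:gamma}; summing over the $E$ local steps produces the $\tfrac{2}{3}(E-1)(2E-1)G^2\sum_t\eta_t^2$ and $4L(EQ+2)\Gamma\sum_t\eta_t^2$ contributions collected in $\psi$. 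The stochastic-gradient term has zero conditional expectation given the iterates and the chain state (unbiasedness of $\nabla F_k(\cdot,\mathcal{B})$), and its second moment, via Assumption~\ref{assumption:variance}, contributes $\sum_k q_k^2\sigma_k^2\pi_k\sum_t\eta_t^2 = \bm{q}^\intercal\bm{\Sigma}\bm{q}$; the quadratic term $\eta_s^2\|\Delta_t\|^2$ is dominated by the same quantities together with the $\tfrac{1}{4} MQ\sum_t(\eta_t^2+1/t^2)$ piece of $\upsilon$, and the telescoped initial gap gives $\tfrac{2}{E}\|\bm{w}_{0,0}-\bm{w}^*\|^2$. Note that~\eqref{eq:stepsize} is exactly what makes this whole numerator finite while $\sum_t\eta_t\to\infty$ drives the bound to zero.

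The main obstacle is the Markovian sampling-bias term $\sum_t\eta_t\,\E\big[\sum_k q_k(\mathds{1}_{\{k\in A_t\}}-\pi_k)\,b_{k,t}\big]$, where $b_{k,t}$ is a quantity built from $\nabla F_k(\bm{w}_{t,0})$ and $\bm{w}_{t,0}-\bm{w}_B^*$, bounded because $W$ is compact (Assumption~\ref{assumption:space}) and $\|\nabla F_k\|\le D$ by~\eqref{eq:D}. Here I would use the look-back technique of~\cite{sun2018markov}: condition on the history $\mathcal{F}_{t-\mathcal{J}_t}$; replacing $\bm{w}_{t,0}$ by $\bm{w}_{t-\mathcal{J}_t,0}$ inside $b_{k,t}$ costs a drift $\|\bm{w}_{t,0}-\bm{w}_{t-\mathcal{J}_t,0}\| \le EGQ\sum_{s=t-\mathcal{J}_t}^{t-1}\eta_s$, and once $b_{k,t}$ is $\mathcal{F}_{t-\mathcal{J}_t}$-measurable, \eqref{eq:mc_convergence} gives $|\E[\mathds{1}_{\{k\in A_t\}}\mid\mathcal{F}_{t-\mathcal{J}_t}]-\pi_k|\le M\,C_P\,\lambda(\bm{P})^{\mathcal{J}_t}$ (summing the per-state bound of~\eqref{eq:mc_convergence} over the at most $M$ chain states in which $k$ is available). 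The definition $\mathcal{J}_t = \min\{\max\{\lceil\ln(2C_PHt)/\ln(1/\lambda(\bm{P}))\rceil,T_P\},t\}$ is calibrated precisely so that $C_P\,\lambda(\bm{P})^{\mathcal{J}_t}\le 1/(2Ht)$, where $H$ of~\eqref{eq:H} converts the geometric part of the bias into a per-round contribution of order $\eta_t/t$, which is summable since $2\eta_t/t\le \eta_t^2+1/t^2$ — this is the source of the $MQ$ piece of $\upsilon$; the drift part, using $\mathcal{J}_t\le \ln(2C_PHt)/\ln(1/\lambda(\bm{P}))+1$ and the gradient bound $D$, yields exactly $\tfrac{\phi}{\ln(1/\lambda(\bm{P}))}$ with $\phi = 2EDGQ\sum_t\ln(2C_PHt)\,\eta_{t-\mathcal{J}_t}^2$. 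The initial rounds, in which $\mathcal{J}_t$ saturates at $t$ or $T_P$ and~\eqref{eq:mc_convergence} does not yet apply (hence the hypothesis $T\ge T_P$), are absorbed with the crude bound $|\mathds{1}_{\{k\in A_t\}}-\pi_k|\le 1$ into the same $MQ$ term. Collecting all pieces, dividing by the positive constant proportional to $E(\bm{\pi}^\intercal\bm{q})\sum_t\eta_t$, and invoking convexity of $F_B$ gives~\eqref{eq:convergence_bound}; the delicate point throughout is that the compactness of $W$ and the uniform bounds $D,G$ of Lemma~\ref{lem:prop} are exactly what keeps the look-back drift from being self-referential in $\|\bm{w}_{t,0}-\bm{w}_B^*\|$.
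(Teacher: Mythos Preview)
Your proposal is correct and hits all the essential ingredients the paper uses: the projection/non-expansiveness recursion, the variance and local-drift bounds in the style of~\cite{li2019convergence}, the look-back technique of~\cite{sun2018markov} with the calibrated lag $\mathcal{J}_t$, and Jensen's inequality on the weighted average $\bar{\bm w}_{T,0}$. The resulting constants line up with $\bm{\Sigma},\upsilon,\psi,\phi$.

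The organization, however, differs from the paper's in a way worth noting. You decompose at the \emph{gradient} level: you replace $\mathds 1_{\{k\in A_t\}}$ by $\pi_k$ inside the cross term, extract the clean $F_B$-descent immediately via $\nabla F_B=(\bm\pi^\intercal\bm q)^{-1}\sum_k\pi_kq_k\nabla F_k$, and then treat the residual $\sum_k q_k(\mathds 1_{\{k\in A_t\}}-\pi_k)\,b_{k,t}$ as a single ``Markov bias'' term handled by look-back. The paper instead works at the \emph{function-value} level: Lemma~\ref{lemma:li} first proves a random-weighted bound $\sum_t\eta_t\,\E\big[\sum_{k\in A_t}q_k(F_k(\bm w_{t,0})-F_k(\bm w_B^*))\big]\le C_1$ using only convexity (not strong convexity---your invocation of $\mu$ for the descent is unnecessary, plain convexity of $F_B$ suffices for $\langle\nabla F_B(\bm w),\bm w-\bm w_B^*\rangle\ge F_B(\bm w)-F_B^*$), then Lemma~\ref{lemma:sun} drifts the argument to $\bm w_{t-\mathcal J_t,0}$, and only \emph{after} that is the conditional expectation $\E_{A_t\mid A_{t-\mathcal J_t}}$ taken so that~\eqref{eq:mc_convergence} converts $\sum_{k\in A_t}q_kF_k$ into $(\bm\pi^\intercal\bm q)F_B$ up to the $MQ/(2Ht)$ correction. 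The paper's route stays closer to the template of~\cite{li2019convergence} (which is convenient because the drift/variance constants are inherited verbatim), whereas your route is more transparent about where the factor $\bm\pi^\intercal\bm q$ in the denominator of~\eqref{eq:convergence_bound} originates; both arrive at the same bound.
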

\vspace{-2.5ex}
Theorem~\ref{theorem:bias} (proof in Appendix \ref{appendix:bias}) proves convergence of the expected biased objective $F_B$ to its minimum $F_B^*$ under correlated client participation. Our bound \eqref{eq:convergence_bound} captures the effect of correlation through the factor $\ln{(1/\lambda(\bm{P}))}$: a high correlation worsens the convergence rate. In particular, we found that the numerator of~\eqref{eq:convergence_bound} has a quadratic-over-linear fractional dependence on $\bm{q}$. Minimizing $\epsilon_{\text{opt}}$ leads, in general, to a different choice of $\bm{q}$ than minimizing $\epsilon_{\text{bias}}$.

\subsection{Minimizing the total error \texorpdfstring{$\epsilon \leq {\epsilon}_{\text{opt}} + {\epsilon}_{\text{bias}}$}{}}
Our analysis points out a trade-off between minimizing $\epsilon_{\text{opt}}$ or~$\epsilon_{\text{bias}}$. Our goal is to find the optimal aggregation weights~$\bm{q}^*$ that 
minimize the \el{upper bound on} total error \ar{$\epsilon(\bm{q})$ 
in~\eqref{eq:total_error}:}
\begin{mini}
    {\scriptstyle \bm{q}}{\epsilon_{\text{opt}}(\bm{q}) + \epsilon_{\text{bias}}(\bm{q})}{}{};
    \addConstraint{\bm{q} \geq 0}
    \addConstraint{\textstyle{\norm{\bm{q}}_1 = Q}}.
	\label{opt:total_error}
\end{mini}

In Appendix~\ref{appendix:convexity} we prove that~\eqref{opt:total_error} is a convex optimization problem, which can be solved with the method of Lagrange multipliers. However, the solution is not of practical utility because the constants in~\eqref{eq:total_error} and~\eqref{eq:convergence_bound} (e.g., $L$, $\mu$, $\Gamma$, $C_P$) are in general problem-dependent and difficult to estimate during training. In particular, $\Gamma$ poses particular difficulties as it is defined in terms of the minimizer of the \ar{target} objective $F$, but the FL algorithm generally minimizes the biased function $F_B$. Moreover, the bound in \eqref{eq:total_error}, \ar{similarly to the bound in~\cite{wang2020tackling}}, diverges when setting some $q_k$ equal to $0$, but this is \gn{simply} an artifact of the proof technique.
A result of more practical interest is the following (proof in Appendix~\ref{appendix:total_variation}): 
\begin{theorem}[{An alternative decomposition of the total error}~${\epsilon}$]
\label{theorem:total_variation}
Under the same assumptions of Theorem \ref{theorem:error}, let $\Gamma' \coloneqq \max_k \{ F_k({\bm{w}}_B^*) - F_k^* \}$.
The following result holds:
    \begin{align}
        \epsilon \leq \underbrace{2 \kappa^2 (F_B(\bm{w}) - F_B^*)}_{\coloneqq \epsilon_{\text{opt}}} 
        + \underbrace{8 \kappa^4 d_{TV}^{2}(\bm{\alpha}, \bm{p}) \Gamma'}_{\coloneqq \epsilon_{\text{bias}}'},
        \label{eq:total_variation}
    \end{align}
where $d_{TV}(\bm{\alpha}, \bm{p}) \coloneqq \frac{1}{2} \sum_{k=1}^{N} \abs{\alpha_k - p_k}$ is the total variation distance between the probability distributions $\bm{\alpha}$ and $\bm{p}$. 
\end{theorem}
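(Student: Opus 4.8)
The plan is to derive \eqref{eq:total_variation} by a small modification of the argument behind Theorem~\ref{theorem:error}, replacing the Cauchy--Schwarz step that produced the $\chi^2$-divergence with a cruder bound that produces a total-variation distance instead, and correspondingly replacing $\Gamma$ by $\Gamma'$. The starting point is the same: by strong convexity and $L$-smoothness of $F$ (which, being a convex combination of the $F_k$, inherits $\mu$-strong convexity and $L$-smoothness), one has the standard quadratic-growth sandwich $\frac{\mu}{2}\norm{\bm{w}-\bm{w}^*}^2 \le F(\bm{w})-F^* \le \frac{L}{2}\norm{\bm{w}-\bm{w}^*}^2$, and analogously for $F_B$ around $\bm{w}_B^*$. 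Chaining these with $\kappa = L/\mu$ lets one bound $F(\bm{w})-F^*$ by a multiple of $F_B(\bm{w})-F_B^*$ plus a term measuring how far the minimizers / the functions themselves are apart; the extra factors of $\kappa$ in front of the bias term come from passing through these inequalities twice, exactly as in Theorem~\ref{theorem:error}.

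The key new step is to control the discrepancy $F(\bm{w}) - F_B(\bm{w}) = \sum_k (\alpha_k - p_k) F_k(\bm{w})$, or more precisely the difference of the corresponding optimal values / the term that survives after the strong-convexity manipulation. First I would write $\sum_k (\alpha_k - p_k) F_k(\bm{w}) = \sum_k (\alpha_k - p_k)\bigl(F_k(\bm{w}) - F_k(\bm{w}_B^*)\bigr)$, which is legitimate because $\sum_k(\alpha_k-p_k)=0$ so subtracting a constant from every $F_k$ changes nothing. Then I would bound $\abs{\sum_k (\alpha_k - p_k)\bigl(F_k(\bm{w}) - F_k(\bm{w}_B^*)\bigr)} \le \sum_k \abs{\alpha_k - p_k}\,\abs{F_k(\bm{w})-F_k(\bm{w}_B^*)}$; evaluating this along the relevant point (the minimizer of $F_B$, or $\bm{w}$ itself depending on exactly which cross-term appears in the chain) and using $\abs{F_k(\bm{w}_B^*) - F_k^*} \le \Gamma'$-type control — i.e. bounding $F_k(\bm{w})-F_k(\bm{w}_B^*)$ by something of order $\Gamma'$ via the definition $\Gamma' = \max_k\{F_k(\bm{w}_B^*)-F_k^*\}$ together with $F_k(\bm{w}) \ge F_k^*$ and the smoothness/quadratic-growth bounds — pulls out a factor $\max_k\abs{F_k(\cdot)-F_k(\bm{w}_B^*)}$ times $\sum_k\abs{\alpha_k-p_k} = 2\,d_{TV}(\bm{\alpha},\bm{p})$. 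Squaring where the proof of Theorem~\ref{theorem:error} squares, the factor $2$ becomes the $4$ that, combined with the $2\kappa^4$ prefactor, yields the $8\kappa^4$ in \eqref{eq:total_variation}.

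Concretely I would mirror the appendix proof of Theorem~\ref{theorem:error} line by line: wherever that proof invokes Cauchy--Schwarz in the form $\bigl(\sum_k (\alpha_k-p_k) x_k\bigr)^2 \le \bigl(\sum_k (\alpha_k-p_k)^2/p_k\bigr)\bigl(\sum_k p_k x_k^2\bigr)$ to generate $\chi^2_{\bm{\alpha}\parallel\bm{p}}$, substitute instead the Hölder-type bound $\bigl(\sum_k (\alpha_k-p_k) x_k\bigr)^2 \le \bigl(\sum_k\abs{\alpha_k-p_k}\bigr)^2 \max_k x_k^2 = 4\,d_{TV}^2(\bm{\alpha},\bm{p})\max_k x_k^2$, with $x_k = F_k(\bm{w}_B^*)-F_k^*$ (or the analogous quantity), whose max is $\Gamma'$. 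The rest of the algebra — the two applications of strong convexity that produce $\kappa^2$ and $\kappa^4$, and the recombination into $\epsilon_{\text{opt}}$ plus the bias term — is identical to Theorem~\ref{theorem:error} and need not be redone.

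The main obstacle, and the place to be careful, is tracking \emph{exactly which point} each $F_k$ is evaluated at in the chain of inequalities, so that replacing $\Gamma$ (an expansiveness gap at $\bm{w}^*$) by $\Gamma'$ (the gap at $\bm{w}_B^*$) is actually valid and not circular — in particular, ensuring the argument does not secretly need $F_k(\bm{w}^*)-F_k^*$, which is what $\Gamma$ controls but $\Gamma'$ does not. One expects this to work precisely because the algorithm's iterates converge to $\bm{w}_B^*$, so the natural reference point for the surviving cross-terms is $\bm{w}_B^*$ rather than $\bm{w}^*$; making that substitution cleanly is the crux. A secondary, purely bookkeeping concern is getting the numerical constant right (that the total-variation normalization $d_{TV}=\frac12\sum\abs{\alpha_k-p_k}$, after squaring, turns the bare $\sum\abs{\alpha_k-p_k}$ bound into the stated $8\kappa^4$ rather than $2\kappa^4$ or $32\kappa^4$), which just requires care with the factors of two.
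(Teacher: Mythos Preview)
Your proposal is correct and matches the paper's approach essentially line for line: the paper's proof of Theorem~\ref{theorem:total_variation} says only that one follows Theorem~\ref{theorem:error} verbatim up to the inequality $\norm{\nabla F(\bm{w}_B^*)}\le L\sum_k\abs{\alpha_k-p_k}\,\norm{\bm{w}_B^*-\bm{w}_k^*}$, then bounds $\norm{\bm{w}_B^*-\bm{w}_k^*}\le\sqrt{\tfrac{2}{\mu}}\sqrt{F_k(\bm{w}_B^*)-F_k^*}$ and pulls out $\max_k\sqrt{F_k(\bm{w}_B^*)-F_k^*}=\sqrt{\Gamma'}$ together with $\sum_k\abs{\alpha_k-p_k}=2\,d_{TV}(\bm{\alpha},\bm{p})$, exactly your ``Hölder-type'' substitution. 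The only slip is that the quantity playing the role of $x_k$ at the squaring step is $\sqrt{F_k(\bm{w}_B^*)-F_k^*}$, not $F_k(\bm{w}_B^*)-F_k^*$ itself (so $\max_k x_k^2=\Gamma'$, not $(\Gamma')^2$); with that correction the constants land on $8\kappa^4$ as stated, and your worry about evaluation points is moot since every $F_k$ in the bias chain is already evaluated at $\bm{w}_B^*$.
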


The new constant $\Gamma'$ is defined in terms of $\bm{w}_B^ *$, and then it is easier to evaluate during training. However, $\Gamma'$ depends on~$\bm{q}$, because it is evaluated at the point of minimum of $F_B$. This dependence makes the minimization of the right-hand side of \eqref{eq:total_variation} more challenging (for example, the corresponding problem is not convex). 
We study the minimization of the two terms $\epsilon_{\text{opt}}$ and~$\epsilon'_{\text{bias}}$ separately and learn some insights, which we use to design the new FL algorithm \cafed.

\subsection{Minimizing \texorpdfstring{${\epsilon}_\text{{opt}}$}{}}
\label{subsection:opt}
The minimization of $\epsilon_{\text{opt}}$ is still a convex optimization problem (Appendix~\ref{appendix:opt}).
In particular, at the optimum non-negative weights are set accordingly to  $q_{k}^* = a (\lambda^* \pi_k - \theta^*)$ with $a$, $\lambda^*$, and $\theta^*$ positive constants (see~\eqref{kkt:y}). It follows that 
clients with smaller availability get smaller weights in the aggregation. In particular, this suggests that clients with the smallest availability can be excluded from the aggregation, leading to the following guideline:

\emph{\underline{Guideline A}: to speed up the convergence, \ar{we can exclude, i.e., set $q_k^*=0$}, the clients with lowest availability $\pi_k$}.


\ar{This guideline can be justified intuitively: updates from clients with low participation may be too sporadic to allow the FL algorithm to keep track of their local objectives. They act as a noise slowing down the algorithm's convergence. It may be advantageous to exclude these clients from participating.}

We observe that the choice of the aggregation weights $\bm{q}$ does not affect the clients' availability process and, in particular, $\lambda(\bm{P})$. However, if the algorithm excludes some clients, it is possible to consider the state space of the Markov chain that only specifies the availability state of the remaining clients, and this Markov chain may have different spectral properties. For the sake of concreteness, we consider here (and in the rest of the paper) the particular case when the availability of each client $k$ evolves according to a two-states Markov chain $(A_t^{k})_{t \geq 0}$ with transition probability matrix $\bm{P_k}$ and these Markov chains are all independent. In this case, the aggregate process is described by the product Markov chain $(A_t)_{t \geq 0}$ \ar{with transition matrix $\bm{P} = \bigotimes_{\scriptstyle k \in \mathcal K} \bm{P_k}$ and $\lambda(\bm{P})= \max_{\scriptstyle k \in \mathcal K} \lambda(\bm{P_k})$, where $\bm{P_i} \bigotimes \bm{P_j}$ denotes the Kronecker product between matrices $\bm{P_i}$ and $\bm{P_j}$~\cite[Exercise~12.6]{levin2017markov}}.
In this setting, it is possible to redefine the Markov chain $(A_t)_{t \geq 0}$ by taking into account the \ar{reduced state space defined by the} clients with a non-null aggregation weight, i.e., \ar{$\bm{P'} = \bigotimes_{\scriptstyle k' \in \mathcal K | q_{k'} >0} \bm{P_{k'}}$ and $\lambda(\bm{P'})= \max_{\scriptstyle k' \in \mathcal K | q_{k'} >0} \lambda(\bm{P_{k'}})$}, which is potentially smaller than the case when all clients participate to the aggregation. These considerations lead to the following guideline:

\emph{\underline{Guideline B}: to speed up the convergence, \ar{we can exclude,
i.e., set $q_k^*=0$}, the clients with largest $\lambda(\bm{P_k})$}. 

Intuition also supports this guideline.
Clients with large $\lambda(\bm{P_k})$ tend to be available or unavailable for long periods of time. Due to the well-known catastrophic forgetting problem affecting gradient methods~\cite{goodfellow2013empirical,kemker2018measuring}, these clients may unfairly steer the algorithm toward their local objective when they appear at the final stages of the training period. Moreover, their participation in the early stages may be useless, as their contribution will be forgotten during their long absence. The FL algorithm may benefit from directly neglecting such clients.

We observe that guideline~B strictly applies to this specific setting where clients' dynamics are independent (and there is no spatial correlation). We do not provide a corresponding guideline for the case when clients are spatially correlated (we leave this task for future research). However, in this more general setting, it is possible to ignore guideline~B but still draw on guidelines~A and~C, or still consider guideline~B if clients are \arx{spatially correlated (see discussion in Section~\ref{sec:discussion_spatial})}.

\subsection{Minimizing \texorpdfstring{${\epsilon}_\text{{bias}}'$}{}}
The bias error $\epsilon'_{\text{bias}}$ in~ \eqref{eq:total_variation} vanishes when the total variation distance between the \ar{target importance $\bm{\alpha}$ and the biased importance $\bm{p}$} is zero, i.e., when \ar{$q_k \propto \alpha_k/\pi_k, \forall k \in \mathcal{K}$}. 
Then, after excluding the clients that contribute the most to the optimization error \ar{and particularly slow down the convergence (guidelines A and B)}, we can assign to the remaining clients an aggregation weight inversely proportional to their availability, \ar{such that the bias error ${\epsilon}_\text{{bias}}'$ is minimized}.

\emph{\underline{Guideline C}: to reduce the bias error, 
we set $q_k^* \propto \alpha_k / \pi_k$
for the clients \el{that are} not excluded by the previous guidelines}.

\section{Proposed Algorithm}
\label{sec:algorithm}
\IncMargin{1em}
\begin{algorithm}[t]
    \SetKwFunction{Protocol}{get}
    \SetKwInOut{Input}{Input}
    \SetKwInOut{Output}{Output}
    \SetAlgoLined
    \SetKwProg{Fn}{Function}{:}{}
    \Input{
    $\bm{w}_{0, 0}$, 
    $\bm{\alpha}$, 
    $\bm{q}^{\scriptscriptstyle (0)}$,
    $\{\eta_{t}\}_{\scriptscriptstyle t=1}^{\scriptscriptstyle T}$, 
    $\eta_{s}$, 
    $E$, 
    $\beta$, 
    $\tau$
    }
    Initialize $\hat{\bm{F}}^{\scriptscriptstyle (0)}$, \ar{$\hat{\bm{F}}^*$}, $\hat{\Gamma}^{'\scriptscriptstyle (0)}$, $\hat{\bm{\pi}}^{\scriptscriptstyle (0)}$, and $\hat{\bm{\lambda}}^{\scriptscriptstyle (0)}$\;
    \For{$t=1, \dots, T$}{
        Receive set of active client $A_{t}$,  
        loss vector $\bm{F}^{\scriptscriptstyle (t)}$\; 
        \label{line:losses}
        Update 
        $\hat{\bm{F}}^{\scriptscriptstyle (t)}$, $\hat{\Gamma}^{'\scriptscriptstyle (t)}$, $\hat{\bm{\pi}}^{\scriptscriptstyle (t)}$, 
        and 
        $\hat{\bm{\lambda}}^{\scriptscriptstyle (t)}$\; \label{line:estimate_params} 
        \ar{Initialize $\bm{q}^{\scriptscriptstyle (t)} = \frac{\bm{\alpha}}{\hat{\bm{\pi}}^{\scriptscriptstyle (t)}}$}\;  
        $\bm{q}^{\scriptscriptstyle (t)} \gets$ 
        \Protocol{$
        \bm{q}^{\scriptscriptstyle (t)}, 
        \bm{\alpha}, 
        \hat{\bm{F}}^{\scriptscriptstyle (t)}, \ar{\hat{\bm{F}}^*}, \hat{\Gamma}^{'\scriptscriptstyle (t)}, \hat{\bm{\pi}}^{\scriptscriptstyle (t)}, \hat{\bm{\lambda}}^{\scriptscriptstyle (t)}$}\; 
        \label{line:lambda}
        $\bm{q}^{\scriptscriptstyle (t)} \gets $ 
        \Protocol{$
        \bm{q}^{\scriptscriptstyle (t)}, 
        \bm{\alpha},  
        \hat{\bm{F}}^{\scriptscriptstyle (t)}, \ar{\hat{\bm{F}}^*}, \hat{\Gamma}^{'\scriptscriptstyle (t)}, \hat{\bm{\pi}}^{\scriptscriptstyle (t)}, \shortminus \hat{\bm{\pi}}^{\scriptscriptstyle (t)}$}\; 
        \label{line:pi}
        \For{client $\{k \in A_{t}; ~q_{k}^{(t)} > 0\}$, in parallel}{
            \For{$j=0,\dots, E-1$}{
                \label{line:local_step_1}
                $\bm{w}_{t,j+1}^k = \bm{w}_{t,j}^k - \eta_t \nabla F_k (\bm{w}_{t,j}^k, \mathcal{B}_{t,j}^k)$ \; 
                \label{line:local_step_2}
            }
            $\Delta_{t}^{k} \gets \bm{w}_{t, E} - \bm{w}_{t, 0}$\;
        } 
        $\bm{w}_{t+1,0} \gets \proj{W}{\bm{w}_{t,0} + \eta_{s} \sum_{k\in A_{t}}q_{k}^{\scriptscriptstyle (t)}\cdot \Delta_{t}^{k}}$\;  \label{line:global_step}
    }
    \BlankLine
    \Fn{
    \Protocol{$\bm{q}$, 
              $\bm{\alpha}$, 
              $\bm{F}$, 
              \ar{$\bm{F}^*$},
              $\Gamma$, 
              $\bm{\pi}$, 
              $\bm{\rho}$}
    }
    {   
        \label{line:exploration_2}
        $\ar{\mathcal{K}} \gets$ sort by descending order in $\bm{\rho}$\; \label{line:sort}
        $\ar{\hat{\epsilon}} \gets \langle \bm{F} \ar{- \bm{F}^*},  \bm{\pi} \tilde{\odot} \bm{q} \rangle + d_{TV}^{2}(\bm{\alpha}, \bm{\pi} \tilde{\odot} \bm{q})\cdot \Gamma$\;
        \For{$k \in \mathcal{K}$}{
            $q^{+}_{k} \gets 0$\; \label{line:remove}
            $\ar{\hat{\epsilon}^{+}} \gets \langle \bm{F} \ar{- \bm{F}^*},  \bm{\pi} \tilde{\odot} \bm{q}^{+} \rangle + d_{TV}^{2}(\bm{\alpha}, \bm{\pi} \tilde{\odot} \bm{q}^{+})\cdot \Gamma$\; \label{line:improvement}
            \uIf{$\hat{\epsilon} - \hat{\epsilon}^{+} \geq \tau$}{ \label{line:stop}
                $\hat{\epsilon} \gets \hat{\epsilon}^{+}$\;
                $\bm{q} \gets \bm{q}^{+}$\;
            }
        }
        \Return{$\bm{q}$}
    }
    \caption{\cafed{} (Correlation-Aware FL)}
    \label{alg:clients_sampling}
\end{algorithm}
Guidelines~A and~B in Section~\ref{sec:analysis} suggest that \el{ the minimization of}~$\epsilon_{\text{opt}}$ can lead to \el{the exclusion of some} available clients from the aggregation step \eqref{eq:fedavg_global_aggregation}, in particular those with low availability and/or high correlation.
For the remaining clients, guideline~C proposes to set their aggregation weight inversely proportional to their availability to reduce the bias error~\ar{$\epsilon_{\text{bias}}'$}. Motivated by these insights, we propose \cafed, \ar{a client sampling and aggregation strategy that takes into account the problem of correlated client availability in FL}, described in Algorithm~\ref{alg:clients_sampling}. \cafed{} learns during training which \el{are the} clients to exclude and how to set the aggregation weights of the other clients to achieve a good trade-off between~$\epsilon_{\text{opt}}$ and~\ar{$\epsilon_{\text{bias}}'$}.
While guidelines~A and~B \ar{indicate which} clients to remove, the \el{exact} number of clients to remove at round~$t$ is identified by \gn{minimizing $\epsilon^{(t)}$ as a proxy for the bound in~\eqref{eq:total_variation}:}\footnote{
Following~\eqref{eq:total_variation}, one could reasonably introduce a hyper-parameter to weigh
the relative importance of the optimization and bias terms in the sum. \arx{We discuss this additional optimization of \cafed{} in Section~\ref{sec:discussion_kappa}.}} 
\begin{align}
\ar{{\epsilon}^{(t)}} \coloneqq F_B(\bm{w}_{t,0}) \ar{- F_B^*} + d_{TV}^{2}(\bm{\alpha}, \bm{p}) \Gamma'.
\label{eq:proxy}
\end{align}


\subsection{\cafed's core steps}

At each communication round $t$, the server sends the current model $\bm{w}_{t,0}$ to all active clients and each client $k$ sends back a noisy estimate $F^{\scriptscriptstyle (t)}_k$ of the current loss computed on a batch of samples $\mathcal{B}_{t,0}^k$, i.e., $F^{\scriptscriptstyle (t)}_k=\frac{1}{|\mathcal{B}_{t,0}^k|} \sum_{\xi \in \mathcal{B}_{t,0}^k}f(\bm{w}_{t,0},\xi)$ (line~\ref{line:losses}). The server uses these values and the information about the current set of available clients $A_t$ to refine its own estimates of each client's loss \ar{($\hat{\bm{F}}^{\scriptscriptstyle (t)} = (\hat F_k^{\scriptscriptstyle (t)})_{k \in \mathcal{K}}$), and each client's loss minimum value ($\hat{\bm{F}}^* = (\hat F_k^*)_{k \in \mathcal{K}}$)}, as well as of~$\Gamma'$, $\pi_k$, $\lambda_k$, \ar{and~$\epsilon^{\scriptscriptstyle (t)}$}, denoted as \gn{$\hat{{\Gamma}}^{'\scriptscriptstyle(t)}$}, $\hat{\pi}_k^{\scriptscriptstyle (t)}$, $\hat{\lambda}_k^{\scriptscriptstyle (t)}$, \ar{and~$\hat \epsilon^{\scriptscriptstyle (t)}$}, respectively (possible estimators are described below) (line~\ref{line:estimate_params}).

The server decides whether excluding clients whose availability \ar{pattern} exhibits high correlation \ar{(high~$\hat \lambda_k^{\scriptscriptstyle (t)}$)} (line~\ref{line:lambda}). \ar{First,} the server considers all clients in descending order of $\hat{\bm{\lambda}}^{\scriptscriptstyle (t)}$ (line~\ref{line:sort}), and evaluates if, by excluding them (line~\ref{line:remove}), \ar{$\hat{\epsilon}^{\scriptscriptstyle (t)}$}~appears to be \ar{decreasing} by more than a threshold~$\tau \geq 0$ (line~\ref{line:stop}). \ar{Then, the server considers clients in ascending order of $\hat{\bm{\pi}}^{\scriptscriptstyle (t)}$, and} repeats the same procedure to \ar{possibly} exclude some of the clients with low availability \ar{(low $\hat \pi_k^{\scriptscriptstyle (t)}$)} (lines~\ref{line:pi}).


Once the participating clients (those with $q_k>0$) have been selected, the server notifies them to proceed updating the current models (lines~\ref{line:local_step_1}--\ref{line:local_step_2}) according to~\eqref{eq:localSGD}, while the other available clients stay idle. Finally, model's updates are aggregated according to \eqref{eq:fedavg_global_aggregation} (line~\ref{line:global_step}).

\subsection{Estimators}
We now briefly discuss possible implementation of the estimators $\hat{F}_k^{\scriptscriptstyle(t)}$, \ar{$\hat{F}_k^*$}, \ar{$\hat{{\Gamma}}^{'\scriptscriptstyle(t)}$},  $\hat{\pi}_k^{\scriptscriptstyle(t)}$, and $\hat{\lambda}_k^{\scriptscriptstyle(t)}$. Server's estimates for the clients' \ar{local} losses \ar{($\hat{\bm{F}}^{\scriptscriptstyle (t)} = (\hat{F}_k^{\scriptscriptstyle (t)} )_{k \in \mathcal{K}}$)} can be obtained \ar{from the received active clients' losses ($\bm{F}^{\scriptscriptstyle (t)} = (F_k^{\scriptscriptstyle (t)})_{k \in A_t}$)} through an auto-regressive filter with parameter $\beta \in (0,1]$:
\begin{align}
    \hat{\bm{F}}^{\scriptscriptstyle (t)} = (\mathbf{1} - \beta \mathds{1}_{A_t}) \odot  \hat{\bm{F}}^{(t-1)} + \beta \mathds{1}_{A_t} \odot \bm{F}^{\scriptscriptstyle (t)},
\end{align}
where \ar{$\odot$ denotes the component-wise multiplication between vectors}, and $\mathds{1}_{A_t}$ is a \gn{$N$-dimensions binary vector whose $k$-th component equals $1$} if and only if $k$ is active at round $t$, i.e., $k \in A_t$. \ar{The server can keep track of the clients' loss minimum values and estimate $F_k^*$ as $\hat{F}_k^* = \min_{\scriptscriptstyle s \in [0, t]} \hat{F}_{k}^{\scriptscriptstyle (s)}$.} The values of $F_B(\bm{w}_{t,0})$, \ar{ $F_B^*$}, $\Gamma'$, \ar{and $\epsilon^{\scriptscriptstyle (t)}$} can be estimated as follows:
\begin{gather}
        \textstyle
        \hat{F}_{B}^{\scriptscriptstyle (t)} - \ar{\hat{F}_{B}^* }
        = \langle \hat{\bm{F}}^{\scriptscriptstyle (t)} - \ar{\hat{\bm{F}}^*},  \hat{\bm{\pi}}^{\scriptscriptstyle (t)} \tilde{\odot} \bm{q}^{\scriptscriptstyle (t)} \rangle, 
        \label{eq:update_biased_objective_estimate}
        \\
        \textstyle
        \ar{\hat{\Gamma}^{'\scriptscriptstyle (t)}} 
        = \max_{\ar{k \in \mathcal{K}}} (\hat{F}_{k}^{\scriptscriptstyle (t)} - \ar{\hat{F}_k^*}),
        \label{eq:update_gamma_estimate}
        \\
        \textstyle
        \ar{\hat{\epsilon}^{\scriptscriptstyle (t)}
        = \hat{F}_{B}^{\scriptscriptstyle (t)} - \ar{\hat{F}_{B}^*} + d_{TV}^2(\bm{\alpha},\hat{\bm{\pi}}^{\scriptscriptstyle (t)} \tilde{\odot} \bm{q}^{\scriptscriptstyle (t)}) \cdot \ar{\hat{\Gamma}^{'\scriptscriptstyle (t)}}}.
        \label{eq:update_error_estimate}
    \end{gather}
where $\bm{\pi} \tilde{\odot} \bm{q} \in \mathbb{R}^{N}$, such that $\left(\bm{\pi} \tilde{\odot} \bm{q}\right)_{k} = \frac{\pi_{k} q_{k}}{\sum_{h=1}^N \pi_{h} q_{h}},~k\in \mathcal{K}$.

For $\hat \pi_k^{\scriptscriptstyle (t)}$, the server can simply keep track of the total number of times client $k$ was available up to time~$t$ and compute $\hat \pi_k^{\scriptscriptstyle (t)}$ using a Bayesian estimator with beta prior, i.e., $\hat \pi_k^{\scriptscriptstyle (t)} = (\sum_{s\le t}\mathds{1}_{k \in A_s}+n_k)/(t+n_k+m_k)$, where $n_k$ and $m_k$ are the initial parameters of the beta prior.

For $\hat \lambda_k^{\scriptscriptstyle (t)}$, the server can assume the client's availability evolves according to a Markov chain with two states (available and unavailable), track the corresponding number of state transitions, and estimate the transition matrix \ar{$\hat{\bm{P}}_k^{\scriptscriptstyle (t)}$} through a Bayesian estimator similarly to what done for $\hat \pi_k^{\scriptscriptstyle (t)}$. Finally, $\hat \lambda_k^{\scriptscriptstyle (t)}$ is obtained computing the eigenvalues of $\hat{\bm{P}}_k^{\scriptscriptstyle (t)}$.

\subsection{\cafed's computation/communication cost}
\cafed{} aims to improve training convergence and not to reduce its computation and communication overhead. Nevertheless, excluding some available clients reduces the overall training cost, as we will discuss in this section referring, for the sake of concreteness, to neural networks' training.

The available clients \ff{not selected for \ar{training} \el{are \ar{only} requested to} \ar{evaluate their} local loss} on the current model once on a single batch \el{instead than} \ar{performing} $E$ gradient updates, which would require roughly \ar{$2 \times E - 1$} more calculations (because of the forward and backward pass). For the selected clients, there is no extra computation cost as computing the loss corresponds to the forward pass they should, in any case, perform during the first local gradient update.

In terms of communication, the excluded clients only transmit the loss, a single scalar, much smaller than the model update. Conversely, participating clients transmit the local loss and the model update. Still, this additional overhead is negligible and likely fully compensated by the communication savings for the excluded clients.






\section{Experimental Evaluation}
\label{sec:experiments}

\begin{figure*}[t]
    \centering
    \includegraphics[width=0.8\textwidth, keepaspectratio]{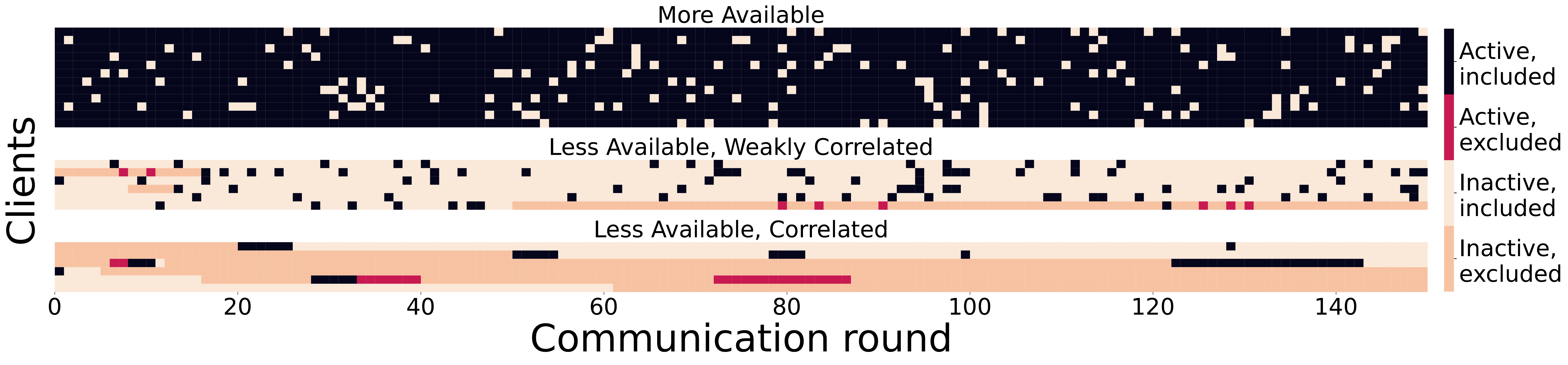}
    \caption[]
    {\small \ar{Clients' activities and \cafed{}'s clients selection on the synthetic dataset.} }
    \label{f:heatmap}
\end{figure*}
\begin{figure}[t]
    \centering
    \includegraphics[width=0.24\textwidth, keepaspectratio]{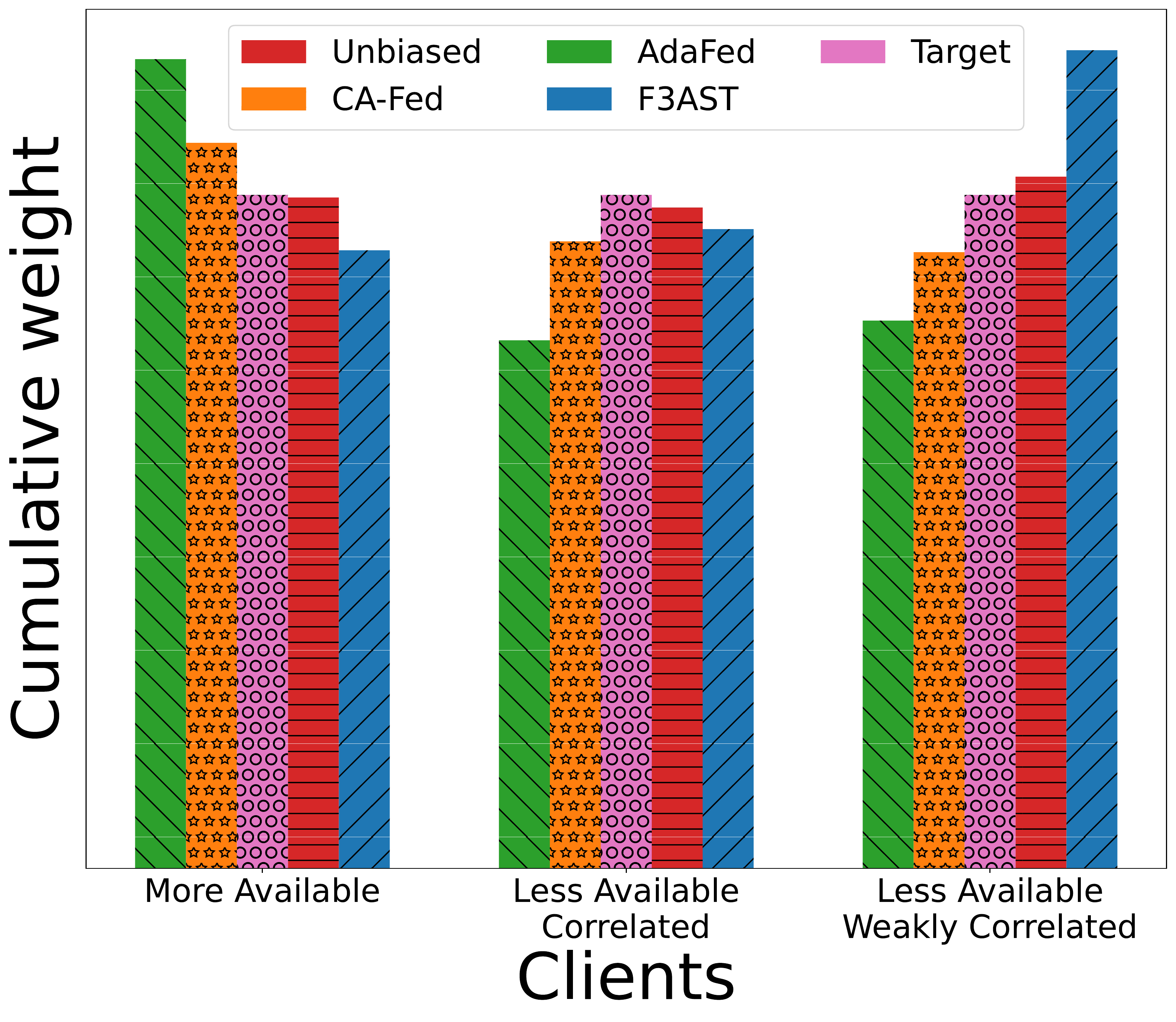}
    \caption[]
    {\small \ar{Importance given to the clients by the different algorithms throughout a whole training process on the synthetic dataset.}}
    \label{f:history}
\end{figure}

\subsection{Experimental Setup}

\paragraph{Federated system simulator}
In our experiments, we simulate the clients' availability dynamics featuring different levels of temporal correlations. We model the activity of each client as a two-state homogeneous Markov process \ar{with state space $\mathcal{S} = \{ \text{``\emph{active}''}, \text{``\emph{inactive}''} \}$. We use $p_{k,s}$ to denote the probability that client $k \in \mathcal{K}$ remains in state $s \in \mathcal{S}$}. 

In order to simulate the statistical heterogeneity present in the federated learning system, we consider an experimental setting with two disjoint groups of clients ${G}_{i},~i=1,2$, to which we associate two different data distributions $\mathcal{P}_{i},~i=1,2$, to be precised later. Let~$r_{i}=|{G}_{i}|/N,~i=1, 2$ denote the fraction of clients in group $i=1, 2$. In order to simulate the heterogeneity of clients' availability patterns in realistic federated systems, we split the clients of each group in two classes uniformly at random: ``more available'' clients whose steady-state probability to be active is~$\ar{\pi_{k,\text{active}}} = 1/2 + g$  and ``less available'' clients with  $\ar{\pi_{k, \text{active}}} = 1/2 - g$, where $g\in(0, 1/2)$ is a parameter controlling the  heterogeneity of clients availability. 
We furthermore split each class of clients in two sub-classes uniformly at random: ``correlated'' clients that tend to persist in the same state ($\ar{\lambda_k} =\nu$ with values of $\nu$ close to $1$), 
and ``weakly correlated'' clients that are almost as likely to keep as to change their state ($\ar{\lambda_k} \sim \mathcal{N}(0, \varepsilon^{2})$, with $\varepsilon$ close to $0$).
In our experiments, we suppose that $r_{1}=r_{2}=1/2$, $g=0.4$, $\nu=0.9$, and $\varepsilon=10^{-2}$.  

\paragraph{Datasets and models} 
All experiments are performed on a binary classification synthetic dataset (described in Appendix~\ref{appendix:synthetic}) and on the real-world MNIST dataset~\cite{lecun-mnisthandwrittendigit-2010}, using $N=24$ clients.
For MNIST dataset, we introduce statistical heterogeneity across the two groups of clients (i.e., we make the two distributions $\mathcal P_1$ and $\mathcal P_2$ different), following the same approach  in~\cite{sattler2020clustered}: 1)~every client is assigned a random subset of the total training data; 2)~the data of clients from the second group is  modified  by  randomly  swapping two pairs of labels. We maintain the  original  training/test  data split of MNIST and use $20\%$ of  the training dataset as validation dataset. 
We use a linear classifier with a ridge penalization of parameter $10^{-2}$, which is a strongly convex objective function, for both the synthetic and the real-world MNIST datasets. 

\paragraph{Benchmarks} We compare \cafed{}, defined in Algorithm~\ref{alg:clients_sampling}, with the \unbiased{} \ar{aggregation strategy}, where all the active clients participate and receive a weight inversely proportional to their availability, and with the state-of-the-art FL algorithms 
discussed in Section~\ref{sec:problem}: \fast~\cite{ribero2022federated} and \adafed~\cite{tan2022adafed}. 
\ar{We tuned the learning rates $\eta,~\eta_{s}$ via grid search, on the grid $\eta : \{ 10^{-3}, 10^{-2.5}, 10^{-2}, 10^{-1.5}, 10^{-1} \}$, $\eta_{s} : \{ 10^{-2}, 10^{-1.5}, 10^{-1}, 10^{-0.5}, 10^{0} \}$.} For \cafed{}, we used $\tau=0$, $\beta=0.2$. We assume all algorithms \ar{can access} an oracle providing the true availability parameters for each client. In practice, \unbiased, \adafed, and \fast{} rely on the exact knowledge of $\ar{\pi_{k,\text{active}}}$, and \cafed{} on $\ar{\pi_{k,\text{active}}}$ and $\ar{\lambda_k}$. \footnote{
\ar{The authors have provided public access to their code and data at: \\ \hspace*{1.1em} \href{https://github.com/arodio/CA-Fed}{https://github.com/arodio/CA-Fed}.}
}

\begin{figure}[t]
    \centering
    \begin{subfigure}[b]{0.235\textwidth} 
        \centering 
        \includegraphics[width=\textwidth, keepaspectratio]{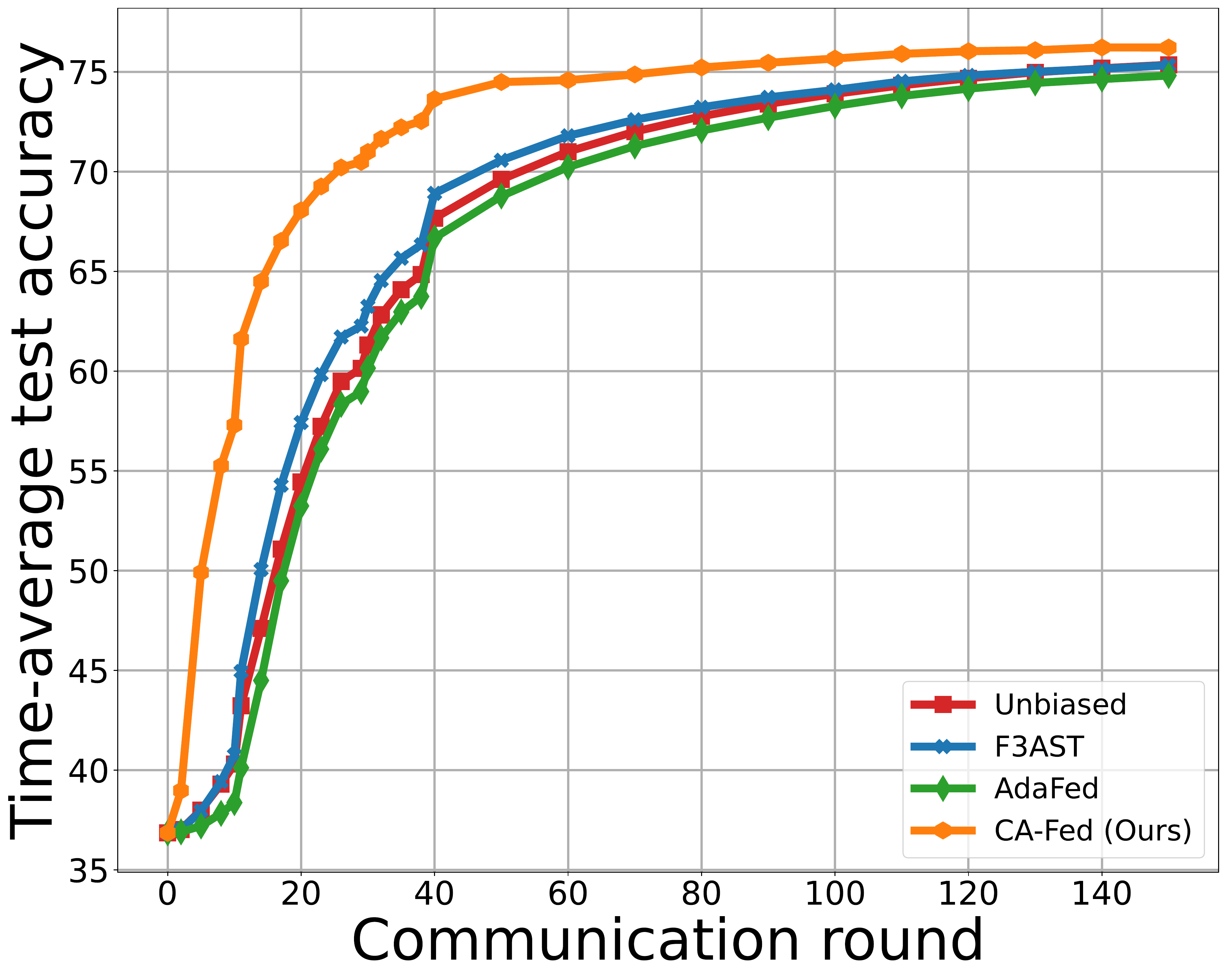}
        \subcaption[]{\small Synthetic}
        \label{f:smooth_test_acc_syntehtic}
    \end{subfigure}
    \hfill
    \begin{subfigure}[b]{0.235\textwidth}
        \centering
        \includegraphics[width=\textwidth, keepaspectratio]{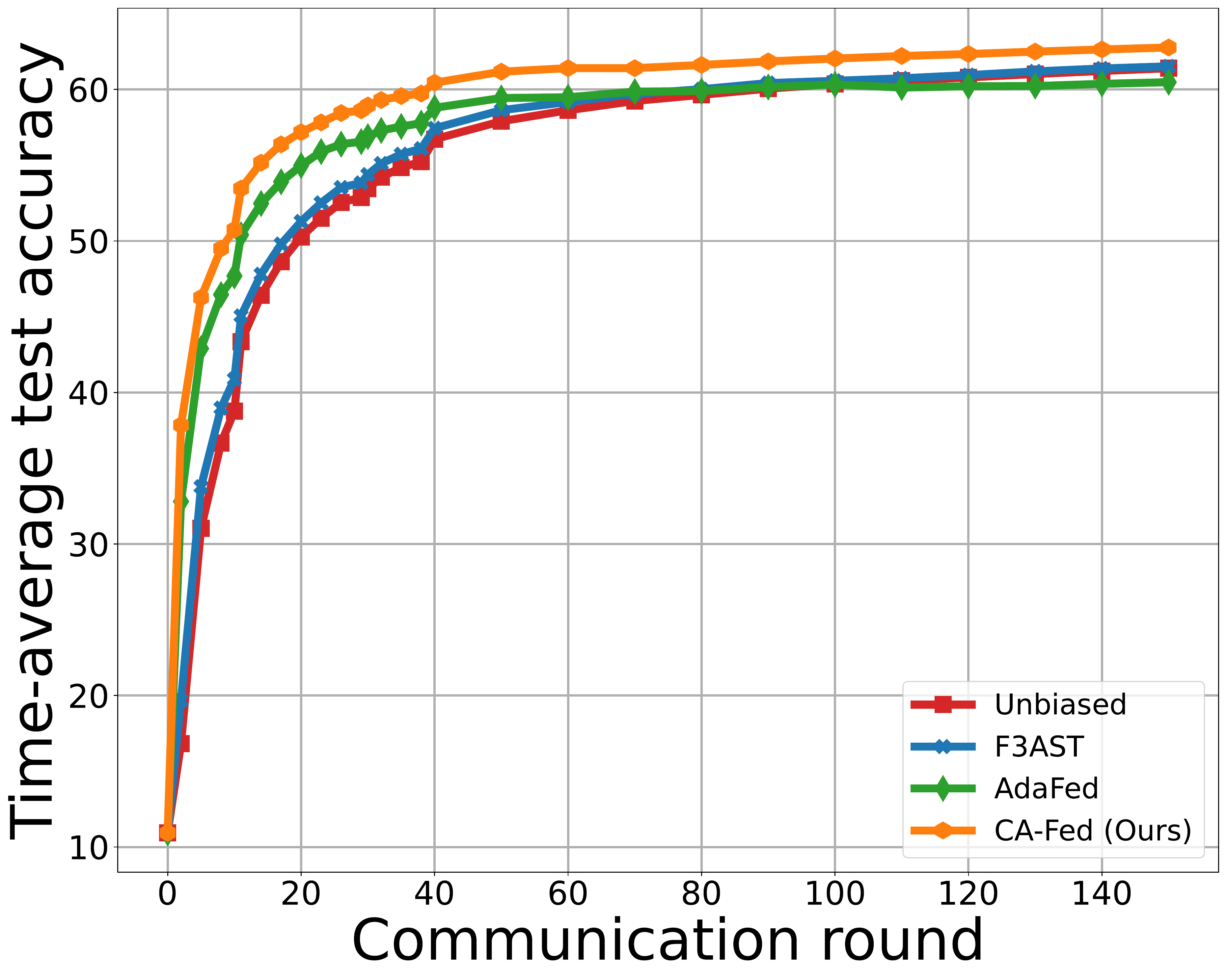}
        \subcaption[]{\small MNIST}
        \label{f:smooth_test_acc_mnist}
    \end{subfigure}
    \caption[]
    {\small \ar{Test accuracy vs number of communication rounds.}} 
    \label{f:test_acc}
\end{figure}

\subsection{Experimental Results}

Figure~\ref{f:heatmap} shows the availability of each client during a training run on the synthetic dataset. Clients selected (resp.~excluded) by \cafed{} are highlighted in \ar{black (resp.~red)}. We observe that  excluded clients tend to be those with low average availability or high correlation.

Figure~\ref{f:history} shows the \ar{importance} $p_k$ (averaged over time) given by different algorithms to each client $k$ during \ar{a full} training run. We observe that \ar{all the algorithms, except \unbiased{}, depart from the target importance} $\bm{\alpha}$. As suggested by guidelines~A and~B, \cafed{} tends to favor the group of ``more available'' clients, at the expense of the ``less available'' clients.

Figure~\ref{f:test_acc} shows the time-average accuracy up to round~$t$ of the learned model averaged over \ar{three} different runs. On both datasets, \cafed{} achieves the highest accuracy, which is about a percentage point higher than the second best algorithm \ar{(\fast{})}.
Table~\ref{tab:convergence_rate} shows for each algorithm: the average over three runs of the maximum test accuracy achieved during training, the time-average test accuracy \ar{achieved during training,} together with its standard deviation within the second half of the training period. Results show that while \cafed{} achieves a
maximum accuracy which is comparable to the \unbiased{} baseline and state-of-the-art \adafed{} and \fast{}, it gets a higher time-average accuracy \ar{($1.24$ percentage points)} in comparison to the second best \ar{(\fast{})}, and a smaller standard deviation \ar{($1.5 \times$)} in comparison to the second best~\ar{(\fast{})}.

\begin{table}[t]
    \caption{\small \ar{Maximum and time-average test accuracy, together with their standard deviations, on the Synthetic / MNIST datasets.}}
    \centering
    \begin{center}
    \begin{small}
    \begin{sc}
    \resizebox{0.48\textwidth}{!}{
        \begin{tabular}{l c c c}
            \toprule
            & \multicolumn{3}{c}{Test Accuracy} 
            \\
            & Maximum & Time-average & Standard deviation 
            \\
            \midrule
            \unbiased{} & \ar{$78.94$} $\,/\,$ \ar{$64.87$} & \ar{$75.32$} $\,/\,$ \ar{$61.39$} & \ar{$0.48$} $\,/\,$ \ar{$1.09$}
            \\
            \fast{} & \ar{$78.97$} $\,/\,$ \ar{$64.91$} & \ar{$75.33$} $\,/\,$ \ar{$61.52$} & \ar{$0.40$} $\,/\,$ \ar{$0.94$}
            \\
            \adafed{} & \ar{$78.69$} $\,/\,$ \ar{$63.77$} & \ar{$74.81$} $\,/\,$ \ar{$60.48$} & \ar{$0.59$} $\,/\,$ \ar{$1.37$}
            \\
            \cafed{} & \ar{$\bm{79.03}$} $\,/\,$ \ar{$\bm{64.94}$} & \ar{$\bm{76.22}$} $\,/\,$ \ar{$\bm{62.76}$} & \ar{$\bm{0.28}$} $\,/\,$ \ar{$\bm{0.61}$}
            \\
            \bottomrule
        \end{tabular}
    }
    \end{sc}
    \end{small}
    \end{center}
    \label{tab:convergence_rate}
\end{table}

\section{Discussion}
\label{sec:discussion}
In this section, we discuss some general concerns and remarks on our algorithm.

\subsection{Controlling the number of excluded clients}
\label{sec:discussion_kappa}
Theorems~\ref{theorem:error} and~\ref{theorem:total_variation} suggest that the condition number $\kappa^2$ can play a meaningful role in the minimization of the total error $\epsilon$. Our algorithm uses a proxy ($\epsilon^{(t)}$) of the total error. To take into account the effect of $\kappa^2$, we can introduce a hyper-parameter that weights the relative importance of the optimization and bias error in~\eqref{eq:proxy}:
\[{\epsilon'}^{(t)} \coloneqq F_B(\bm{w}_{t,0}) - F_B^* + \bar\kappa^2 \cdot d_{TV}^{2}(\bm{\alpha}, \bm{p}) \Gamma'.\]
A small value of $\bar\kappa^2$ penalizes the bias term in favor of the optimization error, resulting in a larger number of clients excluded by \cafed. On the other hand, \cafed{} tends to include more clients for a large value of $\bar\kappa^2$. Asymptotically, for $\bar\kappa^2 \rightarrow + \infty$, \cafed{} reduces to the \unbiased{} baseline. To further improve the performance of \cafed{}, a finer tuning of the values of $\bar\kappa^2$ can be performed.

\subsection{\cafed{} in presence of spatial correlation}
\label{sec:discussion_spatial}
Although \cafed{} is mainly designed to handle temporal correlation, it does not necessarily perform poorly in presence of spatial correlation, as well. 

Consider the following spatially-correlated scenario: clients are grouped in clusters, 
each cluster $c \in \mathcal{C}$ is characterized by an underlying Markov chain, which determines when all clients in the cluster are available/unavailable, the Markov chains of different clusters are independent.
Let $\lambda_c$ denote the second largest eigenvalue in module of cluster-$c$'s Markov chain.
In this case, one needs to exclude all clients in the cluster $\bar c = \arg\max_{c \in \mathcal C} \lambda_c $ to reduce the eigenvalue of the  aggregate Markov chain. 

In this setting, \cafed{} would associate similar eigenvalue estimates to all clients in the same cluster, then it would correctly start considering for exclusion the clients in cluster $\bar c$ and potentially remove sequentially all clients in the same cluster.
These considerations suggest that \cafed{} may  still operate correctly even in presence of 
spatial correlation.


\subsection{About \cafed's fairness}

A strategy that excludes clients from the training phase, such as \cafed{}, may naturally raise fairness concerns. The concept of fairness in FL does not have a unified definition in the literature~\cite[Chapter 8]{ludwig2022federated}: fairness goals can be captured by a suitable choice of the target weights in \eqref{opt:target_objective}. For example, per-client fairness can be achieved by setting $\alpha_k$ equal for every client, while per-sample fairness by setting $\alpha_k$ proportional to the local dataset size $|D_k|$. 
If we assume that the global objective in~\eqref{opt:target_objective} indeed reflects also fairness concerns, then \cafed{} is intrinsically fair, in the sense that it guarantees that the performance objective of the learned model is as close as possible to its minimum value.

\section{Conclusion}
\label{sec:conclusion}
This paper presented the first convergence analysis for a \fedavg{}-like FL algorithm under heterogeneous and correlated client availability. The analysis quantifies how correlation adversely affects the algorithm's convergence rate and highlights a general bias-versus-convergence-speed trade-off. Guided by the theoretical analysis, we proposed \cafed{}, a new FL algorithm that tries to balance the conflicting goals of maximizing convergence speed and minimizing model bias. 
Our experimental results demonstrate that adaptively excluding clients with high temporal correlation and low availability is an effective approach to handle the heterogeneous and correlated client availability in FL. 

\appendix
\subsection{Proof of Theorem \ref{theorem:error}}
\label{appendix:error}

We bound the optimization error of the \ar{target} objective as the optimization error of the \ar{biased} objective plus a bias term:
\begin{align*}
    \textstyle
    F(\bm{w}) - F^* 
    &\textstyle
    \stackrel{\text{\tiny (a)}}{\leq}
    \frac{1}{2 \mu} \norm{\nabla F(\bm{w})}^2
    \stackrel{\text{\tiny (b)}}{\leq}
    \frac{L^2}{2 \mu} \norm{\bm{w} - \bm{w}^*}^2 \\
    &\textstyle
    \stackrel{\text{\tiny (c)}}{\leq}
    \frac{L^2}{\mu} (\norm{\bm{w} - \bm{w}_B^*}^2 + \norm{\bm{w}_B^* - \bm{w}^*}^2) \\
    &\textstyle
    \stackrel{\text{\tiny (d)}}{\leq}
    \underbrace{
    \textstyle
    \frac{2 L^2}{\mu^2} (F_B(\bm{w}) - F_B^*)}_{\coloneqq \epsilon_{\text{opt}}} 
    + \underbrace{
    \textstyle
    \frac{2 L^2}{\mu^2} (F(\bm{w}_B^*) - F^*)}_{\coloneqq \epsilon_{\text{bias}}},
\end{align*}
\arx{where $(a)$, $(b)$, and $(d)$ follow from the Assumptions~\ref{assumption:smoothness},~\ref{assumption:convexity}, and the inequality $(c)$ follows from $(a+b)^2 \leq 2a^2 + 2b^2$. In particular, $(b)$ requires $\nabla F_k(\bm{w}_k^*)=0$. Theorem~\ref{theorem:bias} further develops the optimization error $\epsilon_{\text{opt}}$. We now expand $\epsilon_{\text{bias}}$:}
\begin{align}
    \textstyle
    \norm{\nabla F(\bm{w}_B^*)} 
    &\textstyle
    \stackrel{\text{\tiny (e)}}{=}
    \norm{\sum_{k=1}^N (\alpha_k - p_k) \nabla F_k(\bm{w}_B^*)} \notag \\
    &\textstyle
    \stackrel{\text{\tiny (f)}}{\leq}
    L \sum_{k=1}^N \abs{\alpha_k - p_k} \norm{\bm{w}_B^* - \bm{w}_k^*} 
    \label{eq:replace_th3} \\
    &\textstyle
    \stackrel{\text{\tiny (g)}}{\leq}
    L \sqrt{\frac{2}{\mu}} \sum_{k=1}^N \frac{\abs{\alpha_k - p_k}}{\sqrt{p_k}} \sqrt{p_k (F_k(\bm{w}_B^*) - F_k^*)}, \notag
\end{align}
where $(e)$ uses $\nabla F_B(\bm{w}_B^*) = 0$; $(f)$ applies first the triangle inequality, then the $L$-smoothness, and $(g)$ follows from the $\mu$-strong convexity. In addition, $(f)$ requires $\nabla F_k(\bm{w}_k^*)=0$. Similarly to~\cite{wang2020tackling}, in $(g)$ we multiply numerator and denominator by $\sqrt{p_k}$. By direct calculations, it follows that:
\begin{align*}
    \textstyle
    \norm{\nabla F(\bm{w}_B^*)}^2
    &\textstyle
    \stackrel{\text{\tiny (h)}}{\leq}
    \frac{2L^2}{\mu} \Big( \sum_{k=1}^N {\frac{\abs{\alpha_k - p_k}}{\sqrt{p_k}}} \sqrt{p_k (F_k(\bm{w}_B^*) - F_k^*)} \Big)^2 \\
    &\textstyle
    \stackrel{\text{\tiny (i)}}{\leq}
    \frac{2L^2}{\mu} \Big( \sum\limits_{\scriptscriptstyle k=1}^{\scriptscriptstyle N} \frac{(\alpha_k - p_k)^2}{p_k} \Big) 
    \Big( \sum\limits_{\scriptscriptstyle k=1}^{\scriptscriptstyle N} p_k (F_k(\bm{w}_B^*) - F_k^*) \Big) \\
    &\textstyle
    \stackrel{\text{\tiny (j)}}{\leq}
    \frac{2L^2}{\mu} \chi^2_{\bm{\alpha} \parallel \bm{p}} \Gamma,
\end{align*}
where $(i)$ uses the Cauchy–Schwarz inequality, and $(j)$ used:
\begin{align*}
    \textstyle
    \sum_{k=1}^{N} p_k (F_k(\bm{w}_B^*) - F_k^*)
    \leq
    \sum_{k=1}^{N} p_k (F_k(\bm{w}^*) - F_k^*) 
    \leq 
    \Gamma.
\end{align*}
Finally, by strong convexity of $F$, we conclude that:
\begin{align*}
    \textstyle
    F(\bm{w}_B^*) - F^* 
    \leq \frac{1}{2 \mu} \norm{\nabla F(\bm{w}_B^*)}^2
    \leq \frac{L^2}{\mu^2} \chi^2_{\bm{\alpha} \parallel \bm{p}} \Gamma.
    \qed
\end{align*}

\subsection{Proof of Theorem \ref{theorem:bias}}
\label{appendix:bias}

\subsubsection{Additional notation}
let $\bm{w}_{t,j}^k$ be the model parameter vector computed by device $k$ at the global round $t$, local iteration $j$. We define: 
\begin{align*}
    \textstyle
    g_t(A_t) = \sum_{ k \in A_t} q_k \sum_{ j=0}^{ E-1} \nabla F_k (\bm{w}_{t,j}^k, \xi_{t,j}^k),
\end{align*}
and $\bar{g}_t(A_t) = \E\nolimits_{\xi|A_t}[g_t(A_t)].$

Following~\eqref{eq:localSGD} and~\eqref{eq:fedavg_global_aggregation}, the update rule of \texttt{CA-Fed} is:
\begin{align}
    \label{eq:update_rule_appendix}
    \bm{w}_{t+1,0} = \proj{W}{\bm{w}_{t,0} - \eta_t g_t(A_t)}.
\end{align}

\subsubsection{Key lemmas and results}
we provide useful lemmas and results to support the proof of the main theorem.

\emph{Proof of Lemma \ref{lem:prop}.}
\ar{The boundedness of $W$ gives a bound on $({\bm{w}}_{t,0})_{t \geq 0}$ based on the update rules in~\eqref{eq:localSGD} and~\eqref{eq:fedavg_global_aggregation}. From the convexity of $\{F_k\}_{k \in \mathcal{K}}$, it follows that:
\begin{align*}
    D \coloneqq \sup_{\bm{w} \in W, k \in \mathcal{K}} \Vert \nabla F_k (\bm{w}) \Vert < +\infty.
\end{align*}
 Items~\eqref{eq:D},~\eqref{eq:H} are directly derived from the previous observation. Item~\eqref{eq:G} follows combining~\eqref{eq:D} and Assumption~\ref{assumption:variance}}:
 \begin{align*}
     \textstyle
     \E \Vert \nabla F_k (\bm{w}, \xi) \Vert^2 
     \leq
     D^2 + \max\limits_{\scriptscriptstyle k \in \mathcal{K}} \{\sigma_k^2\}
     \coloneqq 
     G^2.
     \qed
 \end{align*}

\begin{lemma}[Convergence under heterogeneous client availability]
\label{lemma:li}
Let the local functions $\{F_k\}_{k \in \mathcal{K}}$ be convex, Assumptions~\ref{assumption:smoothness},~\ref{assumption:variance} hold. If $\eta_t \leq \frac{1}{2L(EQ+1)}$, we have:
\begin{align*}
    \textstyle
    \sum_t \eta_t 
    &\textstyle \E[\sum_{k \in A_t} q_k \left( F_k(\bm{w}_{t,0}) - F_k(\bm{w}_B^*) \right)]
    \leq \\
    & \textstyle
    + \frac{2}{E} \norm{\bm{w}_{0,0} - \bm{w}_B^*}^2
    + 2 \sum_{k=1}^N \pi_k q_k^2 \sigma_k^2 \sum_t \eta_t^2 \\
    & \textstyle
    + \frac{2}{3} \sum_{k=1}^N \pi_k q_k (E-1) (2E-1) G^2 \sum_t \eta_t^2 \\
    & \textstyle
    + 2 L (EQ+2) \sum_{k=1}^N \pi_k q_k \Gamma \sum_t \eta_t^2  
    := C_1 < +\infty.
\end{align*}
\end{lemma}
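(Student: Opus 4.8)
The plan is to bound $\sum_t \eta_t \,\E[\sum_{k\in A_t} q_k(F_k(\bm{w}_{t,0}) - F_k(\bm{w}_B^*))]$ by relating one step of the perturbed SGD iteration \eqref{eq:update_rule_appendix} to the per-round decrease in $\norm{\bm{w}_{t,0} - \bm{w}_B^*}^2$. First I would expand $\norm{\bm{w}_{t+1,0} - \bm{w}_B^*}^2$ using the non-expansiveness of the projection $\proj{W}{\cdot}$ (legitimate since $\bm{w}_B^* \in W$ by Assumption~\ref{assumption:space}), giving $\norm{\bm{w}_{t+1,0} - \bm{w}_B^*}^2 \le \norm{\bm{w}_{t,0} - \bm{w}_B^*}^2 - 2\eta_t \inner{g_t(A_t)}{\bm{w}_{t,0} - \bm{w}_B^*} + \eta_t^2 \norm{g_t(A_t)}^2$. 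Taking the conditional expectation $\E_{\xi|A_t}$ replaces $g_t(A_t)$ by $\bar g_t(A_t)$ in the inner-product term; the squared-norm term is controlled via $\E_{\xi|A_t}\norm{g_t(A_t)}^2 \le 2\norm{\bar g_t(A_t)}^2 + 2\,\E_{\xi|A_t}\norm{g_t(A_t) - \bar g_t(A_t)}^2$, where the variance piece is bounded using Assumption~\ref{assumption:variance} (after a Jensen/independence argument over the $E$ local steps and clients in $A_t$) by something proportional to $\sum_{k\in A_t} q_k^2 \sigma_k^2 \, E$.

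Next I would handle the cross term $\inner{\bar g_t(A_t)}{\bm{w}_{t,0} - \bm{w}_B^*} = \sum_{k\in A_t} q_k \sum_{j=0}^{E-1} \inner{\nabla F_k(\bm{w}_{t,j}^k)}{\bm{w}_{t,0} - \bm{w}_B^*}$. The standard trick is to split $\bm{w}_{t,0} - \bm{w}_B^* = (\bm{w}_{t,0} - \bm{w}_{t,j}^k) + (\bm{w}_{t,j}^k - \bm{w}_B^*)$, apply $L$-smoothness and $\mu$-strong convexity (here only convexity is needed since we allow general convex $F_k$) to get $\inner{\nabla F_k(\bm{w}_{t,j}^k)}{\bm{w}_{t,j}^k - \bm{w}_B^*} \ge F_k(\bm{w}_{t,j}^k) - F_k(\bm{w}_B^*)$, and then bound the drift $\norm{\bm{w}_{t,0} - \bm{w}_{t,j}^k}$ between consecutive local iterates: since each local step moves by at most $\eta_t$ times a stochastic gradient whose squared expectation is $\le G^2$ by Lemma~\ref{lem:prop}\eqref{eq:G}, one gets $\E\norm{\bm{w}_{t,0} - \bm{w}_{t,j}^k}^2 \le j^2 \eta_t^2 G^2$ (or a slightly sharper $j\sum\cdots$ bound), which after summing over $j$ produces the $\frac{2}{3}(E-1)(2E-1)G^2$ coefficient. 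Converting $F_k(\bm{w}_{t,j}^k)$ back to $F_k(\bm{w}_{t,0})$ costs another smoothness term, which is where the $4L(EQ+2)\Gamma$ (and the $EQ$ factor inside the stepsize condition $\eta_t \le \frac{1}{2L(EQ+1)}$) comes from — one absorbs a $\norm{\bm{w}_{t,j}^k - \bm{w}_B^*}^2$ term back into the left-hand side, which is possible precisely when $\eta_t$ is small enough. The $\Gamma$ enters because $F_k(\bm{w}_B^*) - F_k^* \le F_k(\bm{w}^*) - F_k^* \le \Gamma$ bounds the gradient-norm-squared at $\bm{w}_B^*$ via $\norm{\nabla F_k(\bm{w}_B^*)}^2 \le 2L(F_k(\bm{w}_B^*) - F_k^*) \le 2L\Gamma$.

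Then I would take the full expectation (over the availability chain as well), using that $\E[\mathds{1}_{k\in A_t}] \to \pi_k$ — but more carefully, since at finite $t$ the marginal $\mathbb{P}(k\in A_t)$ is not exactly $\pi_k$; however, in this lemma the availability terms appear already multiplied by $\eta_t^2$ inside a convergent sum, so I would bound $\mathbb{P}(k \in A_t) \le 1$ crudely where needed and keep $\pi_k$ only in the leading telescoped term, OR invoke that after a burn-in the marginals are within $C_P \lambda(\bm{P})^t$ of $\pi_k$ so the difference is summable against $\eta_t^2$; either way the correlation-sensitive bookkeeping is deferred to the main proof of Theorem~\ref{theorem:bias} and here one just needs the coefficients $\pi_k q_k$, $\pi_k q_k^2$ to appear. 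Finally, summing the one-step inequality over $t=0,\dots$ telescopes the $\norm{\bm{w}_{t,0}-\bm{w}_B^*}^2$ differences to $\norm{\bm{w}_{0,0}-\bm{w}_B^*}^2$ (the factor $\frac{2}{E}$ arising from dividing through by $\eta_t E/2$-type normalizations in the drift bookkeeping), and the remaining terms are exactly the three $\sum_t\eta_t^2$ sums, which are finite by the stepsize condition \eqref{eq:stepsize} (indeed $\sum_t \eta_t^2 < \sum_t \ln(t)\eta_t^2 < \infty$). The main obstacle I anticipate is the careful tracking of constants in the client-drift term and the absorption step — getting exactly $4L(EQ+2)\Gamma$ and $\frac{2}{3}(E-1)(2E-1)G^2$ requires being disciplined about when to use $Q = \sum_k q_k$ versus $\sum_{k\in A_t} q_k \le Q$, and about which cross terms get Young's-inequality-split with which coefficient; the probabilistic/correlation part is comparatively easy here because everything nonlinear in the iterates is already paired with $\eta_t^2$.
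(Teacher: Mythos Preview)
Your approach is essentially the same as the paper's: expand $\norm{\bm{w}_{t+1,0}-\bm{w}_B^*}^2$ via projection non-expansiveness, split into a deterministic part $A_1=\norm{\bm{w}_{t,0}-\bm{w}_B^*-\eta_t\bar g_t}^2$, a zero-mean cross term $A_2$, and a variance term $A_3=\eta_t^2\norm{g_t-\bar g_t}^2$, then bound $A_1$ and $A_3$ exactly as in Li et al.~\cite{li2019convergence} (the paper explicitly defers to those ``key steps''), obtaining the variance bound $A_3\le E\sum_{k\in A_t}q_k^2\sigma_k^2$ and the local-drift bound $\sum_{k\in A_t}q_k\sum_{j=0}^{E-1}\norm{\bm{w}_{t,j}^k-\bm{w}_{t,0}}^2\le \tfrac{1}{6}\eta_t^2\sum_{k\in A_t}q_k E(E-1)(2E-1)G^2$; telescoping over $t$ then gives the claim.

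One correction: your justification ``$F_k(\bm{w}_B^*)-F_k^*\le F_k(\bm{w}^*)-F_k^*\le\Gamma$'' is wrong per client --- there is no reason $F_k(\bm{w}_B^*)\le F_k(\bm{w}^*)$ should hold for each $k$. What \emph{does} hold is the aggregate inequality $\sum_k \pi_k q_k\bigl(F_k(\bm{w}_B^*)-F_k^*\bigr)=(\sum_h\pi_h q_h)\sum_k p_k\bigl(F_k(\bm{w}_B^*)-F_k^*\bigr)\le(\sum_h\pi_h q_h)\sum_k p_k\bigl(F_k(\bm{w}^*)-F_k^*\bigr)\le(\sum_h\pi_h q_h)\,\Gamma$, where the middle step uses only $F_B(\bm{w}_B^*)\le F_B(\bm{w}^*)$; this is exactly what the paper uses (see step (j) in Appendix~\ref{appendix:error}) and is what you need to recover the $\sum_k\pi_k q_k\,\Gamma$ term.
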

\emph{Proof of Lemma \ref{lemma:li}.}
\begin{align*}
    \norm{\bm{w}_{t+1,0} - \bm{w}_B^*}^2 
    = \norm{\proj{W}{\bm{w}_{t,0} - \eta_t g_t} - \proj{W}{\bm{w}_B^*}}^2  \notag \\
    \leq \norm{\bm{w}_{t,0} - \eta_t g_t - \bm{w}_B^* + \eta_t \bar{g}_t - \eta_t \bar{g}_t}^2 = A_1 + A_2 + A_3,
\end{align*}
where: 
\begin{align*}
    A_1 &= \norm{\bm{w}_{t,0} - \bm{w}_B^* - \eta_t \bar{g}_t}^2, \\
    A_2 &= 2 \eta_t \langle \bm{\bm{w}}_{t,0} - \bm{w}_{B}^{*} - \eta_t \bar{g}_t, \bar{g}_t - g_t \rangle, \\
    A_3 &= \eta_t^2 \norm{g_t - \bar{g}_t}^2. 
\end{align*}

Note $\E[A_2]=0$. We bound $A_1$, $A_3$ using the key steps in~\cite{li2019convergence}:

(1) the variance of $g_t(A_t)$ is bounded if the variance of the stochastic gradients at each device is bounded:

\begin{align*}
    A_3 
    &\textstyle
    = \E_{ \mathcal{B} \mid A_t}
    \norm{g_t - \bar{g}_t}^2 = \\
    &\textstyle
    = \sum_{k \in A_t} q_k^2  \sum_{j=0}^{ E-1} \E_{\mathcal{B} \mid A_t} \norm{\nabla F_k (\bm{w}_{t,j}^k, \xi_{t,j}^k) {-} \nabla F_k (\bm{w}_{t,j}^k)}^2 \\
    &\textstyle
    \leq E \sum_{k \in A_t} q_k^2 \sigma_k^2;
\end{align*}
(2) the distance of the local model $\bm{w}_{t,E}^k$ from the global model $\bm{w}_{t,0}$ is bounded since the expected squared norm of the stochastic gradients is bounded:
\begin{align*}
    &\textstyle
    \E_{\mathcal{B} \mid A_t}
    \sum_{k \in A_t} q_k \sum_{j=0}^{E-1} \norm{\bm{w}_{t,j}^k - \bm{w}_{t,0}}^2 = \\
    &\textstyle
    \quad
    = \E_{\mathcal{B} \mid A_t} \sum_{k \in A_t} q_k \sum_{j=1}^{E-1} \eta_t^2 \norm{\sum_{j'=0}^{j-1} \nabla F_k (\bm{w}_{t,j'}^k, \xi_{t,j'}^k)}^2 \\
    &\textstyle
    \quad
    \leq \eta_t^2 \sum_{k \in A_t} q_k \sum_{j=1}^{E-1} j \sum_{j'=0}^{j-1} \E_{\mathcal{B} \mid A_t} \norm{\nabla F_k (\bm{w}_{t,j'}^k, \xi_{t,j'}^k)}^2 \\
    &\textstyle
    \quad
    \leq \eta_t^2 \sum_{k \in A_t} q_k G^2 \sum_{j=1}^{E-1} j^2 \\
    &\textstyle
    \quad
    = \frac{1}{6} \eta_t^2 \sum_{k \in A_t} q_k E(E-1)(2E-1) G^2.
    \qed
\end{align*}

\begin{lemma}[Optimization error after $\mathcal{J}_t$ steps] 
\label{lemma:sun}
Let Assumptions~\ref{assumption:markov_chain},~\ref{assumption:space} hold, the local functions $\{F_k\}_{k \in \mathcal{K}}$ be convex, $D, H$ be defined as in~\eqref{eq:D},~\eqref{eq:H}, and $\mathcal{J}_t$ defined as in Theorem~\ref{theorem:bias}. Then:
\begin{align*}
    &\textstyle
    \sum_t
    \textstyle \eta_t 
    \E[\sum_{k \in A_t} q_k (F_{k}(\bm{w}_{t-\mathcal{J}_t,0}) - F_{k}(\bm{w}_{t,0}))] \\
    &\textstyle
    \leq E D G Q \sum_t \mathcal{J}_t \eta_{t-\mathcal{J}_t}^2 \sum_{k=1}^N \pi_k q_k := \frac{C_3}{\ln(1/\lambda(\bm{P}))} < +\infty.
\end{align*}
\end{lemma}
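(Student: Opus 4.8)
\emph{Proof proposal.} The plan is to reduce $F_k(\bm{w}_{t-\mathcal{J}_t,0}) - F_k(\bm{w}_{t,0})$ to the Euclidean distance $\norm{\bm{w}_{t-\mathcal{J}_t,0}-\bm{w}_{t,0}}$ between the two iterates, and then control that distance by unrolling the update rule \eqref{eq:update_rule_appendix} over the last $\mathcal{J}_t$ rounds. Since each $F_k$ is convex, $F_k(\bm{w}_{t-\mathcal{J}_t,0}) - F_k(\bm{w}_{t,0}) \le \inner{\nabla F_k(\bm{w}_{t-\mathcal{J}_t,0})}{\bm{w}_{t-\mathcal{J}_t,0}-\bm{w}_{t,0}}$, and Cauchy--Schwarz together with the uniform gradient bound \eqref{eq:D} gives $F_k(\bm{w}_{t-\mathcal{J}_t,0}) - F_k(\bm{w}_{t,0}) \le D\,\norm{\bm{w}_{t-\mathcal{J}_t,0}-\bm{w}_{t,0}}$. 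Summing over $k\in A_t$ with weights $q_k$ yields $\sum_{k\in A_t}q_k\big(F_k(\bm{w}_{t-\mathcal{J}_t,0}) - F_k(\bm{w}_{t,0})\big) \le D\,\norm{\bm{w}_{t-\mathcal{J}_t,0}-\bm{w}_{t,0}}\sum_{k\in A_t}q_k$. (The constant $H$ enters only through the definition of $\mathcal{J}_t$ and is not used directly.)

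Next I would bound the iterate gap. Because $\proj{W}{\cdot}$ is non-expansive and $\bm{w}_{s,0}=\proj{W}{\bm{w}_{s,0}}\in W$, one step of \eqref{eq:update_rule_appendix} obeys $\norm{\bm{w}_{s+1,0}-\bm{w}_{s,0}}\le\eta_s\norm{g_s(A_s)}$; telescoping from $s=t-\mathcal{J}_t$ to $t-1$ gives $\norm{\bm{w}_{t-\mathcal{J}_t,0}-\bm{w}_{t,0}}\le\sum_{s=t-\mathcal{J}_t}^{t-1}\eta_s\norm{g_s(A_s)}$. By the triangle inequality $\norm{g_s(A_s)}\le\sum_{k\in A_s}q_k\sum_{j=0}^{E-1}\norm{\nabla F_k(\bm{w}_{s,j}^k,\xi_{s,j}^k)}$, so taking the mini-batch expectation and applying Jensen's inequality with \eqref{eq:G} gives $\E_{\mathcal{B}}\norm{g_s(A_s)}\le E G\sum_{k\in A_s}q_k\le E G Q$. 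Using that the stepsizes are non-increasing (as for the usual schedules satisfying \eqref{eq:stepsize}), $\sum_{s=t-\mathcal{J}_t}^{t-1}\eta_s\le\mathcal{J}_t\,\eta_{t-\mathcal{J}_t}$, so $\E_{\mathcal{B}}\norm{\bm{w}_{t-\mathcal{J}_t,0}-\bm{w}_{t,0}}\le E G Q\,\mathcal{J}_t\,\eta_{t-\mathcal{J}_t}$; multiplying by the outer $\eta_t$ and using $\eta_t\le\eta_{t-\mathcal{J}_t}$ produces the factor $\mathcal{J}_t\,\eta_{t-\mathcal{J}_t}^2$.

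It remains to deal with the factor $\sum_{k\in A_t}q_k$ and to sum over $t$. After the batch expectation the distance bound above is deterministic, so taking the expectation over the availability process leaves $\E[\sum_{k\in A_t}q_k]=\sum_k\mathbb{P}(k\in A_t)q_k$, which I would identify with $\sum_k\pi_kq_k$ (exactly if the chain starts from stationarity, asymptotically otherwise, which is the regime relevant to the infinite sum). Collecting the pieces gives $\sum_t\eta_t\,\E[\sum_{k\in A_t}q_k(F_k(\bm{w}_{t-\mathcal{J}_t,0})-F_k(\bm{w}_{t,0}))]\le E D G Q\big(\sum_k\pi_kq_k\big)\sum_t\mathcal{J}_t\,\eta_{t-\mathcal{J}_t}^2$, and one sets $C_3\coloneqq\ln(1/\lambda(\bm{P}))\cdot E D G Q\big(\sum_k\pi_kq_k\big)\sum_t\mathcal{J}_t\,\eta_{t-\mathcal{J}_t}^2$. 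Finiteness follows from \eqref{eq:stepsize}: by its definition $\mathcal{J}_t=O(\ln t)$ and $\mathcal{J}_t=o(t)$, hence $\eta_{t-\mathcal{J}_t}^2\le c\,\eta_t^2$ for a constant $c$ (e.g.\ for $\eta_t\asymp1/t$), so $\sum_t\mathcal{J}_t\,\eta_{t-\mathcal{J}_t}^2=O(\sum_t\ln(t)\,\eta_t^2)<+\infty$.

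The main obstacle I anticipate is the last step: $\sum_{k\in A_t}q_k$ and the iterate gap $\norm{\bm{w}_{t-\mathcal{J}_t,0}-\bm{w}_{t,0}}$ are not independent, since both are driven by the Markov chain $(A_t)$, so one must first render the gap bound deterministic via the coarse estimate $\sum_{k\in A_s}q_k\le Q$ before separating the two, and then be comfortable passing from $\mathbb{P}(k\in A_t)$ to $\pi_k$. A secondary technical point is keeping the finiteness argument clean despite the index shift by $\mathcal{J}_t$, which needs only a mild comparison between $\eta_{t-\mathcal{J}_t}$ and $\eta_t$, harmless for the $1/t$-type schedules admitted by \eqref{eq:stepsize}.
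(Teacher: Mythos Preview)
Your proposal is correct and follows essentially the same route as the paper: bound the loss difference by the $D$-Lipschitz property derived from convexity and \eqref{eq:D}, telescope $\norm{\bm{w}_{t-\mathcal{J}_t,0}-\bm{w}_{t,0}}$ via the triangle inequality, control each one-step displacement by $\eta_s EG\sum_{k\in A_s}q_k$ using non-expansiveness of the projection and \eqref{eq:G}, absorb one factor $\sum_{k\in A_s}q_k$ into $Q$ to decouple the gap from the outer $\sum_{k\in A_t}q_k$, and finally replace $\E[\sum_{k\in A_t}q_k]$ by $\sum_k\pi_kq_k$. The only cosmetic difference is that the paper converts $\eta_t\eta_d$ into $\tfrac{1}{2}(\eta_t^2+\eta_d^2)$ via AM--GM before invoking $\eta_t,\eta_d\le\eta_{t-\mathcal{J}_t}$, whereas you use monotonicity directly to get $\eta_t\sum_{s}\eta_s\le\mathcal{J}_t\eta_{t-\mathcal{J}_t}^2$; both yield the same bound, and the stationarity caveat you flag for the last step is a soft spot that the paper also leaves implicit.
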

For the proof of Lemma~\ref{lemma:sun}, we introduce the following results:
\begin{flalign}
    &\textstyle
    \label{eq:prop1-3}
    \abs{F_k(\bm{v}) - F_k(\bm{w})} \leq D \cdot \norm{\bm{v}-\bm{w}}, ~\forall \bm{v},\bm{w} \in W, \\
    &\textstyle
    \label{eq:prop1-4}
    \E_{\ar{\mathcal{B}_{t, 0}^{k}}, \dots, \ar{\mathcal{B}^{k}_{t, E{-}1}}} \norm{\bm{w}_{t+1,0} - \bm{w}_{t,0}} \leq \eta_t G E (\sum_{k \in A_t} q_k).
\end{flalign} 
Equation~\eqref{eq:prop1-3} is due to convexity of $\{F_k\}_{k \in \mathcal{K}}$, which gives:
\begin{align*}
    \textstyle
    \scalar{\nabla F_k(\bm{v})}{\bm{v} - \bm{w}} 
    \leq 
    \norm{F_k(\bm{v}) - F_k(\bm{w})}
    \leq 
    \scalar{\nabla F_k(\bm{w})}{\bm{v} - \bm{w}};
\end{align*}
the Cauchy–Schwarz inequality concludes:
\begin{align*}
    \textstyle
    \abs{F_k(\bm{v}) - F_k(\bm{w})}
    &\leq 
    \max \{ \norm{\nabla F_k(\bm{v})}, \norm{\nabla F_k(\bm{w})} \} \norm{\bm{v} - \bm{w}} \\
    &\leq 
    D \cdot \norm{\bm{v} - \bm{w}}.
\end{align*}
Equation~\eqref{eq:prop1-4} follows combining equations~\eqref{eq:G} and~\eqref{eq:update_rule_appendix}:
\begin{align*}
    \textstyle
    \E_{\mathcal{B} | A_t} 
    &\norm{\bm{w}_{t+1,0} - \bm{w}_{t,0}} \leq \\
    &\textstyle
    \leq 
    \eta_t \E_{\mathcal{B} | A_t} \norm{\sum_{k \in A_t} q_k \sum_{j=0}^{E-1} \nabla F_k (\bm{w}_{t,j}^k, \xi_{t,j}^k)} \\
    &\textstyle
    \leq
    \eta_t \sum_{k \in A_t} q_k \sum_{j=0}^{E-1} \E_{\mathcal{B} | A_t} \norm{\nabla F_k (\bm{w}_{t,j}^k, \xi_{t,j}^k)} \\
    &\textstyle
    \leq
    \eta_t G E (\sum_{k \in A_t} q_k).
\end{align*}
\begin{proof}[Proof of Lemma \ref{lemma:sun}]
The evolution of the local objectives after $\mathcal{J}_t$ communication rounds is bounded:
\begin{align*}
    \textstyle
    \sum_t &
    \textstyle
    \eta_t \E [\sum_{k \in A_t} q_k (F_{k}(\bm{w}_{t-\mathcal{J}_t,0}) - F_{k}(\bm{w}_{t,0}))] \\
    &\textstyle
    \stackrel{\text{\tiny (a)}}{\leq} D \sum_t \eta_t \E [\sum_{k \in A_t} q_k \E_{B} \norm{\bm{w}_{t-\mathcal{J}_t,0} - \bm{w}_{t,0}}] \\
    &\textstyle
    \stackrel{\text{\tiny (b)}}{\leq} D \sum_t \eta_t \sum_{d=t-\mathcal{J}_t}^{t-1} \E[\sum_{k \in A_t} q_k \E_{B} \norm{\bm{w}_{d,0} - \bm{w}_{d+1,0}}] \\
    &\textstyle
    \stackrel{\text{\tiny (c)}}{\leq} E D G \sum_t \sum_{d=t-\mathcal{J}_t}^{t-1} \eta_t \eta_d \E[\sum_{k \in A_t} q_k \sum_{k' \in A_d} q_{k'}] \\
    &\textstyle
    \stackrel{\text{\tiny (d)}}{\leq} \frac{E D G}{2} \sum_t \sum_{d=t-\mathcal{J}_t}^{t-1} (\eta_t^2 + \eta_d^2) \E[\sum_{k \in A_t} q_k \sum_{k' \in A_d} q_{k'}] \\
    &\textstyle
    \stackrel{\text{\tiny (e)}}{\leq} E D G Q \sum_t \mathcal{J}_t \eta_{t-\mathcal{J}_t}^2 \sum_{k=1}^N \pi_k q_k := \frac{C_3}{\ln(1/\lambda(\bm{P}))},
\end{align*}
where $(a)$ follows from~\eqref{eq:prop1-3}; $(b)$ applies the triangle inequality; $(c)$ uses~\eqref{eq:prop1-4}; $(d)$ applies the Cauchy–Schwarz inequality; $(e)$ uses $\eta_t < \eta_d \leq \eta_{t-\mathcal{J}_t}$ and $\sum_{k=1}^N q_k = Q$.
\end{proof}

\subsubsection{{Core of the proof}}

The proof consists in two main steps:

$1.\sum_t \eta_t \sum_{k=1}^N \pi_k q_k \E[F_B(\bm{w}_{t - \mathcal{J}_t, 0}) - F_B^*)] {\leq} C_2 {+} \frac{C_3}{\ln(1/\lambda(\bm{P}))}$;

$2.\sum_t \eta_t \sum_{k=1}^N \pi_k q_k \E[F_B(\bm{w}_{t, 0}) {-} F_B(\bm{w}_{t - \mathcal{J}_t, 0})] {\leq} \frac{C_3}{\ln(1/\lambda(\bm{P}))}$.

\emph{Step 1.} Combining Lemma \ref{lemma:li} and \ref{lemma:sun}, we get:
\begin{align*}
    \textstyle
    \sum_t \eta_t \E[\sum\limits_{\scriptscriptstyle k \in A_t} q_k (F_{k}(\bm{w}_{t - \mathcal{J}_t, 0}) - F_{k}(\bm{w}_B^*))]
    \leq C_1 + \frac{C_3}{\ln(1/\lambda(\bm{P}))}.
\end{align*}
The constant $\mathcal{J}_t$, introduced in~\cite{sun2018markov}, is an important parameter for the analysis and frequently used. Combining its definition in Theorem~\ref{theorem:bias} and equation~\eqref{eq:mc_convergence}, it follows:
\begin{align}
    \textstyle
    \abs*{[P^{\mathcal{J}_t}]_{i,j} - \pi_j} \leq C_P \lambda(\bm{P})^{\mathcal{J}_t} \leq \frac{1}{2Ht}, 
    \quad \forall i,j \in [M].
    \label{eq:6.28}
\end{align}
Assume $t \geq T_P$. We derive an important lower bound:
\begin{align}
    &\textstyle
    \E_{A_t \mid A_{t-\mathcal{J}_t}}[\sum_{k \in A_t} q_k (F_k (\bm{w}_{t-\mathcal{J}_t,0}) - F_k(\bm{w}_B^*))] \notag \\
    &\textstyle
    \stackrel{\text{\tiny (a)}}{=} \sum_{\mathcal{I}=1}^M \mathbb{P} (A_t {=} \mathcal{I} {\mid} A_{t-\mathcal{J}_t}) \sum_{k \in \mathcal{I}} q_k (F_k (\bm{w}_{t-\mathcal{J}_t,0}) {-} F_k(\bm{w}_B^*)) \notag \\
    &\textstyle
    \stackrel{\text{\tiny (b)}}{=} \sum_{\mathcal{I}=1}^M ~[P^{\mathcal{J}_t}]_{A_{t-\mathcal{J}_t}, \mathcal{I}} ~\sum_{k \in \mathcal{I}} q_k \left(F_k (\bm{w}_{t-\mathcal{J}_t,0}) - F_k(\bm{w}_B^*)\right) \notag \\
    &\textstyle
    \stackrel{\text{\tiny (c)}}{\geq} \sum_{\mathcal{I}=1}^M ~\left(\pi(\mathcal{I}) - \frac{1}{2Ht}\right) \sum_{k \in \mathcal{I}} q_k (F_k (\bm{w}_{t-\mathcal{J}_t,0}) - F_k(\bm{w}_B^*)) \notag \\
    &\textstyle
    \stackrel{\text{\tiny (d)}}{\geq} (\sum_{k=1}^N \pi_k q_k) \cdot (F_B(\bm{w}_{t-\mathcal{J}_t,0}) - F_B^* ) - \frac{1}{2t} M Q,
\end{align}
where $(a)$ is the definition of the conditional expectation, $(b)$ uses the Markov property, $(c)$ follows from~\eqref{eq:6.28}, and $(d)$ is due to~\eqref{eq:H}. Taking total expectations:
\begin{align}
    \textstyle
    (
    &\textstyle
    \sum_{k=1}^N \pi_k q_k) \sum_t \eta_t \E[F_B(\bm{w}_{t-\mathcal{J}_t,0}) - F_B^*] \notag \\
    &\textstyle
    \leq \sum_t \eta_t \E[\sum_{k \in A_t} q_k (F_k(\bm{w}_{t - \mathcal{J}_t, 0}) - F_k(\bm{w}_B^*))] \notag \\
    &\textstyle
    \quad + \frac{1}{4} M Q \sum_t (\eta_t^2 + \frac{1}{t^2})
    = C_2 + \frac{C_3}{\ln(1/\lambda(\bm{P}))},
\end{align}
where $C_2 = C_1 + \frac{1}{4} M Q \sum_t (\eta_t^2 + \frac{1}{t^2})$.

\emph{Step 2.}
By direct calculation (similar to Lemma \ref{lemma:sun}):
\begin{align*}
    \textstyle
    (\sum_{k=1}^N \pi_k q_k) \sum_t \eta_t \E[F_B(\bm{w}_{t,0}) - F_B(\bm{w}_{t-\mathcal{J}_t,0})] 
    {\leq} \frac{C_3}{\ln(1/\lambda(\bm{P}))}.
\end{align*}

Summing Step 1 and 2, and applying Jensen's inequality:
\begin{align*}
    &\textstyle
    (\sum_{t=1}^T \eta_t) (\sum_{k=1}^N \pi_k q_k) \E[F_B(\bar{\bm{w}}_{T,0}) - F_B^*] \leq \\
    &\textstyle
    (\sum_{k=1}^N \pi_k q_k) \sum_{t=1}^T \eta_t \E[F_B(\bm{w}_{t,0}) - F_B^*]
    \leq C_2 + \frac{2C_3}{\ln(1/\lambda(\bm{P}))},
\end{align*}
where $\bar{\bm{w}}_{T,0} := \frac{\sum_{t=1}^T \eta_t \bm{w}_{t,0}}{\sum_{t=1}^T \eta_t}$, and the constants are in \eqref{eq:convergence_bound}. \qed

\subsection{Proof of Theorem \ref{theorem:total_variation}}
\label{appendix:total_variation}

It follows the same lines of Theorem~\ref{theorem:error}, developing~\eqref{eq:replace_th3} as:
\begin{align*}
    \textstyle
    \norm{\nabla F(\bm{w}_B^*)}
    &\textstyle
    \leq L\sqrt{\frac{2}{\mu}} \sum_{k=1}^N \abs{\alpha_k - p_k} \sqrt{(F_k(\bm{w}_B^*) - F_k^*)} \\
    &\textstyle
    \leq 2 L\sqrt{\frac{2}{\mu}} d_{TV}(\bm{\alpha},\bm{p}) \sqrt{{\Gamma}'},
\end{align*}
where $d_{TV}(\bm{\alpha},\bm{p}) \coloneqq \frac{1}{2} \sum_{k=1}^N \abs{\alpha_k - p_k}$ is the total variation distance between the probability measures $\bm{\alpha}$ and $\bm{p}$.
\qed

\subsection{Minimizing \texorpdfstring{$\epsilon_\text{opt}$}{}}
\label{appendix:opt}

Equation~\ref{eq:convergence_bound} defines the following optimization problem:
\begin{mini*}
{\scriptstyle \bm{q}}{\textstyle f(\bm{q})={\frac{\frac{1}{2} \bm{q}^\intercal \bm{A} \bm{q} + B} {\boldsymbol{\pi}^\intercal \bm{q}} + \textstyle C};}
{}{}
\addConstraint{~\bm{q} \ge 0}
\addConstraint{~\boldsymbol{\pi}^\intercal \bm{q} > 0}
\addConstraint{\textstyle \norm{\bm{q}}_{1} = {Q}.}
\end{mini*}
Let us rewrite the problem by adding a variable $s:=1/\bm{\pi}^\intercal \bm{q}$ and then replacing $\bm{y}:=s\bm{q}$. Note that the objective function is the perspective of a convex function, and is therefore convex:
\begin{mini!}|s|
    {\scriptstyle \bm{y},s}
    {\textstyle f(\bm{y},s) = \frac{1}{2s} \bm{y}^\intercal \bm{A} \bm{y} + Bs + C}{}{}
    \addConstraint{\bm{y} \geq 0, ~s > 0, ~\bm{\pi}^\intercal \bm{y} = 1, ~\norm{\bm{y}}_1=Qs}.
    \label{opt:constraints}
\end{mini!}
The Lagrangian function $\mathcal{L}$ is as follows:
\begin{align}
    \mathcal{L}(\bm{y}, s, \lambda, \theta, \bm{\mu}) = 
    \textstyle
    \frac{1}{2s} \bm{y}^\intercal \bm{A} \bm{y} + Bs + C
    + \notag \\ + \lambda (1 - \bm{\pi}^\intercal \bm{y}) + \theta ( \textstyle{\norm{\bm{y}}_1 - Qs)} - \bm{\mu}^\intercal \bm{y}.
\label{eq:lagrangian}
\end{align}
Since the constraint $s>0$ defines an open set, the set defined by the constraints in~\eqref{opt:constraints} is not closed. However, the solution is never on the boundary $s=0$ because $\mathcal{L}^* \rightarrow +\infty$ as $s \rightarrow 0^+$, and we can consider $s \geq 0$. The KKT conditions for $y_k^*$ read:
\begin{align}
    \textstyle
    \text{if $y_{k}^* > 0$:} ~y_{k}^* = \frac{s^*}{A[{kk}]} (\lambda^* \pi_k - \theta^*); \hfill ~\text{$y_k^* = 0$ otherwise.}
  \label{kkt:y}
\end{align}
Since $\lambda^* \geq 0$, the clients with smaller $\pi_k$ may have $q_k^*=0$.
\vspace{0.2cm}

\subsection{Convexity of \texorpdfstring{$\epsilon_{\text{opt}} + \epsilon_{\text{bias}}$}{}}
\label{appendix:convexity}

In Appendix~\ref{appendix:opt}, we proved that $\epsilon_{\text{opt}}(\bm{q})$ is convex. To prove that $\epsilon_{\text{bias}}(\bm{q})$ is also convex, we need to study the convexity of $\chi^2_{\bm{\alpha} \parallel \bm{p}} = \sum_{\scriptscriptstyle k=1}^{\scriptscriptstyle N} (f_k \circ g_k)(\bm{q})$, where $f_k(p_k) = (p_k - \alpha_k)^2/p_k$, and $g_k(\bm{q}) = (\pi_k q_k)/\sum_{\scriptscriptstyle h=1}^{\scriptscriptstyle N} \pi_h q_h$. We observe that $f_k(p_k)$ is convex, and $g_k(\bm{q})$ is a particular case of linear-fractional function~\cite{boyd2004convex}. By direct inspection, it can be proved that $(f_k \circ g_k)(\bm{q})$ is convex in $\text{dom}(f_k \circ g_k) = \{ \bm{q}: \norm{\bm{q}}_1 = Q>0\}$.
\vspace{0.2cm}

\subsection{Synthetic dataset}
\label{appendix:synthetic}
Our synthetic datasets has been generated as follows:
\begin{enumerate}
    \item For client $k\in\mathcal{K}$, sample group identity~$i_{k}$ from a Bernoulli distribution of parameter $1/2$;
    \item Sample model parameters $\bm{w}^{*} \sim \mathcal{N}(0, I_{d})$ from the $d$-dimensional normal distribution;
    \item For client $k\in\mathcal{K}$ and sample index $j\in\{1,\dots, 150\}$, sample clients input data $\bm{x}_{k}^{(j)} \sim \mathcal{N}(0, I_{d})$ from the $d$-dimensional normal distribution;
    \item For client $k\in\mathcal{K}$ such that $i_{k}=0$ and sample index $j\in\{1,\dots, 150\}$, sample the true labels $y_{k}^{(j)}$ from a Bernoulli distribution with parameter equal to $\text{sigmoid}( \langle \bm{w}^{*}, \bm{x}_{k}^{(j)} \rangle)$;
    \item For client $k\in\mathcal{K}$ such that $i_{k}=1$ and sample index $j\in\{1,\dots, 150\}$, sample the true labels $y_{k}^{(j)}$ from a Bernoulli distribution with parameter equal to $0.8 \cdot \text{sigmoid}(  \langle \bm{w}^{*}, \bm{x}_{k}^{(j)} \rangle)+ 0.2 \cdot (1- \text{sigmoid}(  \langle \bm{w}^{*}, \bm{x}_{k}^{(j)} \rangle ) )$.
\end{enumerate}

\clearpage  
\bibliographystyle{IEEEtran}
\bibliography{IEEEabrv,references.bib}

\end{document}